\newtheorem*{theorem*}{Theorem}
\newtheorem{theorem}{Theorem}
\newtheorem{lemma}[theorem]{Lemma}
\newtheorem{corollary}[theorem]{Corollary}
\newtheorem{proposition}[theorem]{Proposition}
\newcommand{\tr}{\operatorname{tr}}
\newcommand{\RR}{\mathbb{R}}
\newcommand{\CC}{\mathbb{C}}
\newcommand{\EE}{\mathbb{E}}
\newcommand{\var}{\operatorname{Var}}
\newcommand{\sign}{\operatorname{sign}}
\newcommand{\ii}{\operatorname{i}}
\newcommand{\indi}{\mathds{1}}
\newcommand{\simg}{\mathbb{S}_d}
\newcommand{\Fcal}{\mathcal{F}}
\newcommand{\Hcal}{\mathcal{H}}
\newcommand{\Pcal}{\mathcal{P}}
\newcommand{\Xcal}{\mathcal{X}}
\newcommand{\Ocal}{\mathcal{O}}
\newcommand{\probP}{P}
\newcommand{\probQ}{Q}
\newcommand{\kernel}{k}
\newcommand{\kendall}{\kernel_\tau}
\newcommand{\band}{\nu}
\newcommand{\mallows}{\kernel_m^\band}
\newcommand{\mmd}{\operatorname{MMD}}
\newcommand{\numObj}{d}
\newcommand{\EmbdPoly}[1]{\Phi_{#1}}
\newcommand{\EmbdNor}[1]{\overline \Phi_{#1}}
\newcommand{\twosets}{\mathcal{T}_{\text{un}}}
\newcommand{\defn}{\ensuremath{: \, = }}
\long\def\comment#1{}
\newcommand{\usedim}{d}
\newcommand{\real}{\RR}
\newcommand{\polyKer}[1]{\kernel^{#1}}
\newcommand{\NorPolyKer}[1]{\overline\kernel^{#1}}
\newcommand{\LamPolyKer}[2]{\overline\kernel^{#1,#2}}
\newcommand{\power}{p}
\begin{document}

\title{\bf On kernel methods for covariates that are rankings }
\author{
Horia Mania$^{\epsilon}$ \\
\and
Aaditya Ramdas$^{\epsilon,\sigma}$\\
\and
Martin J. Wainwright$^{\epsilon, \sigma}$
\thanks{Martin J. Wainwright was partially
supported by Air Force Office of Scientific Research
AFOSR-FA9550-14-1-0016, Office of Naval Research DOD ONR-N00014, and
NSF grant CIF-31712-23800.}\\
\and
 Michael I. Jordan$^{\epsilon, \sigma}$
\thanks{Michael I. Jordan was partially supported by the Mathematical Data Science program of the
Office of Naval Research}\\
 \and
  Benjamin Recht$^{\epsilon}$\thanks{Benjamin Recht was partially supported by
ONR awards N00014-15-1-2620, N00014-13-1-0129, and N00014-14-1-0024
and NSF awards CCF-1148243 and CCF-1217058.}\\
\and
  \hspace{5in}
  \and
{\small \texttt{hmania, aramdas, wainwrig, jordan, brecht ~@berkeley.edu}}\\
Departments of  Statistics$^{\sigma}$ and EECS$^{\epsilon}$, \\ University of California,
 Berkeley, CA, 94720
}
 
\maketitle

\begin{abstract}
  \noindent Permutation-valued features arise in a variety of
  applications, either in a direct way when preferences are elicited
  over a collection of items, or an indirect way in which
  numerical ratings are converted to a ranking. To date, there has been relatively
  limited study of regression, classification, and testing
  problems based on permutation-valued features, as opposed to
  permutation-valued responses.  This paper studies the use of
  reproducing kernel Hilbert space methods for learning 
  from permutation-valued features.  
  These methods embed the rankings into
  an implicitly defined function space, and allow for efficient
  estimation of regression and test functions in this richer space.
  Our first contribution is to characterize both the feature spaces
  and spectral properties associated with two kernels for
  rankings, the Kendall and Mallows kernels.  Using tools from
  representation theory, we explain the limited expressive power of
  the Kendall kernel by characterizing its degenerate spectrum, and in
  sharp contrast, we prove that Mallows' kernel is universal and
  characteristic.  We also introduce families of polynomial kernels
  that interpolate between the Kendall (degree one) and Mallows'
  (infinite degree) kernels.  We show the practical effectiveness of
  our methods via applications to Eurobarometer survey data as well as
  a Movielens ratings dataset.
\end{abstract}

\noindent%
{\it Keywords:} 
Mallows kernel, Kendall kernel, polynomial
kernel, representation theory, Fourier analysis, symmetric group

\date{}

\section{Introduction}

Ranking data arises naturally in any context in which preferences are
expressed over a collection of alternatives. Familiar examples include
election data, ratings of consumer items, or choice of schools. Preferences
can be expressed directly via relative comparisons of alternatives, or
indirectly via scores assigned to the different alternatives.  Preferences
are also often expressed implicitly; e.g., through click activity on
the web. In this paper, we consider datasets in which each covariate
corresponds to a complete ranking over a set of $\numObj$ alternatives---that
is, a permutation belonging to the symmetric group. The inferential problems
that we consider are regression, classification and testing problems with rankings
as covariates.

As a running example to which we return in
Section~\ref{sec:survey_data}, consider the Eurobarometer 55.2 survey
conducted in several European countries in 2001, recently published by
the~\cite{eu200155}.  Each respondent was asked to indicate their
preferences over sources of information about scientific developments;
their options were: TV, radio, newspapers/magazines, scientific
magazines, the internet, and school/university.  Therefore, each
observation in the survey contained a ranking of $d=6$ objects, along
with other covariates such as the participant's age, gender, etc; a
snippet is shown in Table~\ref{tab:survey_data}.  Many natural
questions arise from this dataset.  Can we predict a person's
age/gender from their ranking? Do men and women (or old and young)
have the same distribution over sources of information? The primary
goal of this paper is to develop and analyze some principled
methods for answering such questions. 

\begin{table}[h!]
\centering\footnotesize
\begin{tabular}{|c|c|c|c|}\hline
Respondent & Gender & Age & Ranking of news sources \\\hline\hline
 $1$ & F & $32$ & TV $>$ Radio $>$ School/University $>$ Newspapers/Mags. $>$ Web $>$ Sci. Mags.  \\\hline
 $2$ & F & $84$ & TV $>$ Radio $>$ Newspapers/Mags. $>$ School/University $>$ Sci. Mags.  $>$ Web  \\\hline
$3$ & F & $65$ & TV $>$ Newspapers/Mags. $>$ Sci. Mags.  $>$ Radio $>$ School/University $>$ Web  \\\hline
$4$ & M & $29$ & Web $>$ Radio $>$ Newspapers/Mags. $>$ TV $>$ Sci. Mags. $>$ School/University  \\\hline
\end{tabular}
\caption{
Snippet of the Eurobarometer $55.2$ survey data.  
}
\label{tab:survey_data} 
\end{table}

There is a large existing literature on the use of rank statistics
for testing and inference; for instance, see the book by~\cite{lehmann2006nonparametrics}
and references therein.  However, this body of work does not address
problems in which the ranking themselves act as covariates. Thus,
inferential problems in which the rankings are naturally viewed as
covariates are generally simplified in various ways. For example,
in the original report on the Eurobarometer survey data by
the~\cite{eu200155}, the authors measured only the frequency with
which each of the six sources of information was ranked in the first
or second position. Their analysis did not distinguish between
respondents' first and the second preferences and disregarded the
information encoded in their bottom four preferences. When covariates
have been included, the analysis is generally strongly parametric;
for example, \cite{francis2010modeling} analyze the same dataset by
extending the classical Bradley-Terry model~(\citeyear{bradley1952rank}) to
incorporate covariates such as sex and age.

Our focus in the current paper is on nonparametric models in which the
covariates are rankings.  We build on work of \cite{jiao2015kendall}, 
who discuss the use of Mercer kernels for ranking data.  Kernels on
the symmetric group induce an inner-product structure on permutations
by implicitly embedding them into a suitable Reproducing Kernel Hilbert
Space (RKHS).  This space is defined by a bivariate kernel function, and the representer
theorem~\citep{KimWah71} allows problems of regression and testing to
be reduced to the computation of the kernel values $k(\sigma,\pi)$ for
pairs of permutations $(\sigma, \pi)$.  We view kernel-based
methodology as particularly appropriate for ranking problems: in
particular, it allows us to transition from the cumbersome setting of
the non-Abelian symmetric group of permutations to the familiar
setting of Hilbert spaces.  This methodology does not require
us to make generative or probabilistic assumptions, and is practically
viable as long as kernel evaluations are computationally efficient.


\subsection{Kernels on the symmetric group}
  
There is a rich theoretical understanding of kernels on Euclidean
spaces, such as the linear, polynomial, Matern, Laplace and Gaussian
kernels~\citep{learningkernels}. The latter two are especially
popular because they are \emph{translation-invariant}, meaning that
$k(x,y) = k(x+z,y+z)$ for all $x,y,z\in \RR^\numObj$,
\emph{characteristic}, meaning the maximum mean discrepancy over the
unit ball of the RKHS defines a metric on the space of distributions,
and \emph{universal}, meaning the RKHS is dense in the space of
square-integrable functions.  The latter property ensures that any
square-integrable decision boundary or regression function can be
approximated arbitrarily well by a sequence of elements from the RKHS.

Permutations lie within the symmetric group, and various kernels for
this non-Abelian group have been proposed
\citep{kondor2002diffusion,kondor2010ranking, jiao2015kendall}.  Many
of these kernels, including the Kendall and Mallows' kernels considered
in this paper, are \emph{right-invariant}, meaning that they are
invariant to a re-indexing of the underlying objects.  This property
is desirable for our applications: otherwise, the kernel similarity
between a pair of permutations would depend on how the items were
indexed. Much of the focus in the theoretical literature on kernel
methods has focused on the \emph{bi-invariant} class of kernels;
these are kernels that are both right- and left-invariant. A prominent
example of a bi-invariant kernel is the diffusion kernel,
which are quite well understood \citep[see, e.g.,][]{kondor2008group}.
Unfortunately, such kernels are not suitable for our applications, since for any
bi-invariant kernel, the value between a pair of rankings that rank
a specific item in positions one and two respectively would be the
same as if they ranked it in positions, say, one and twenty.  We thus
focus on right-invariant kernels, such as Kendall's and Mallows' kernels
on the symmetric group, and aim to bring the understanding of these 
kernels to the level of the bi-variant kernels.  In particular we
analyze the feature maps and spectral properties of the Kendall's
and Mallows'kernels, as well as a new class of polynomial kernels.

There is also a mathematical literature on metrics on the symmetric
group (e.g., Cayley's metric, Ulam's metric, and Spearman's footrule).
However, with the important exception of nearest-neighbor methods,
most statistical analysis methods are more compatible with inner-product
representations (kernels, similarities) than with metrics (distances,
dissimilarities).

\subsection{Contributions}

After a presentation of basic background on kernel methods on the
symmetric group in Section~\ref{SecKernelMethod}, we begin our 
development by presenting an analysis of Kendall's and Mallows'
kernels from a primal point of view in Section~\ref{SecFeatures}.
In particular, in Proposition~\ref{PropKendallMap}, we prove that the Kendall
kernel Gram matrix always has rank $\binom{\numObj}{2}$, and discuss
the statistical implications of this result.  Then, in
Proposition~\ref{PropMallowsMap}, we present a novel
finite-dimensional feature map for the Mallows' kernel.
This result is surprising, because the Mallows' kernel is 
the analog for permutations of the Gaussian kernel.  The
latter does \emph{not} have a finite-dimensional feature map.

In Euclidean spaces, there exists a large body of work on the spectral
properties of kernels (the decay of the eigenvalues of their Fourier transforms).
This informs the statistical analysis of kernel methods, providing leverage
on the ability of kernels to discriminate between distributions, or estimate
decision boundaries and regression functions.  Motivated by this, in
Section~\ref{SecKendallMallowsFreq} we study the spectra of Kendall's
and Mallows' kernels, proceeding via a non-Abelian variant of
Bochner's theorem. This analysis requires a foray into representation
theory~\citep{fulton1991representation}. We provide as much background
on representation theory as is necessary to understand our theorem
statements, leaving the proofs and fuller development of representation
theory for the supplementary material. Theorem~\ref{ThmKendallTransform}
fully characterizes the Fourier spectrum of the Kendall kernel. In particular we show that it
has only two nonzero irreducible representations, both of which turn
out to be rank-one matrices; this degeneracy suggests its strength as
a kernel is useful only in a limited range of
problems. Theorem~\ref{ThmMallows} provides a first-principles proof
of the fact that the Mallow's kernel is universal and characteristic;
i.e, every irreducible representation is a strictly positive-definite
matrix.

In Section~\ref{SecPolyKernels}, we propose and analyze natural
families of polynomial kernels of degree $p$ that interpolate between
the Kendall and Mallows kernels (corresponding to $p=1$ and $p =
\infty$ respectively). We study their (primal) feature maps and (dual)
spectra and in Theorem~\ref{ThmPoly}, we prove that $p=d-1$ suffices
for the kernel to be universal and characteristic.

In addition to these theoretical insights, we also present the results
of various experiments with our kernel representations.  In our first
set of experiments, we apply kernel methods to a simulated data set in
order to illustrate our predicted differences in the empirical power
of two-sample hypothesis tests using different kernels for rankings,
and discuss on which instances we expect Kendall's or Mallows' kernels
to have higher power. We then apply these kernel-based tests to the
Eurobarometer survey data, and also fitted kernel SVM and kernel
regression models to this data in order to showcase the usefulness of
kernel methods to leverage ranking data. Our two-sample tests find
that men and women do have significantly different preferences, the
classifiers have a test error of 34\% for predicting if the respondent
was old or young; and the regression from rankings to age has a test
prediction error of about 11 years.  Moreover, we studied a data set
consisting of ratings for movies, in which we transformed the users'
ratings across movie genres into rankings. We find signficant
evidence for males and females having different preferences over movie
genres, a simple illustration of the possible utility in converting
absolute ratings into relative rankings.

\comment{
\paragraph{Paper organization.} To summarize, the remainder of this paper is structured as
follows. Section~\ref{SecKernelMethod} provides background on kernel
methods in the context of rankings, and further discusses our
goals. Section~\ref{SecFeatures} presents our results regarding the
feature maps of Kendall's and Mallows' kernels.  In
Section~\ref{SecKendallMallowsFreq} we introduce some basics of
representation theory for the symmetric group, and we present and
discuss the two theorems characterizing the spectra of Kendall's and
Mallows' kernels. Section~\ref{SecPolyKernels} analyzes natural
families of polynomial kernels. In Section~\ref{sec:experiments} we
present our empirical findings.  We conclude in Section~\ref{SecConc}
with a summary of our results and some interesting open questions. The
proofs of our results are included in the supplementary material.
}


\section{An overview of kernel methods for rankings}
\label{SecKernelMethod}

In order to understand the use of kernels for permutation-valued
features, we first need to introduce some standard terminology.

\paragraph{Symmetric group $\simg$.} There is a natural one-to-one correspondence 
between permutations and rankings. Indeed, the set 
$[d] \defn \{1, 2, \ldots, d\}$ can represent both the labels of a collection 
of $d$ objects and the rankings of these items. 
For any permutation \mbox{$\sigma : [d] \to [d]$} 
we can view $\sigma(i)$ as the rank of
object $i$. The set of all permutations forms a group with the
standard function composition $\sigma \circ \sigma'$; that is,
we have $\pi = \sigma \circ \sigma' \iff \pi(i) = \sigma(\sigma'(i))$.  This group is
known as the symmetric group on $d$ elements and it is denoted by
$\simg$.

\paragraph{Universal RKHS.} A kernel is a bivariate function, $\kernel\colon \simg \times \simg \rightarrow \RR$, 
such that for any collection of rankings, the associated Gram matrix is positive semi-definite.  We let $\Fcal_k$ denote the reproducing kernel Hilbert space (RKHS) induced by the kernel $k$, which is a set of functions defined by the closure of the span of $\{k(\sigma,\cdot)\}_{\sigma \in \simg}$. We also define the RKHS inner product between two functions $f=\sum_{j=1}^{d!} a_j k(\sigma_j,\cdot)$ and $g=\sum_{j=1}^{d!} b_j k(\sigma_j,\cdot)$ to be $\langle f,g \rangle_\Fcal = \sum_k \sum_j a_k b_j k(\sigma_k,\sigma_j)$.  This inner product induces the RKHS norm $\|f\|_{\Fcal_k} = \sqrt{\langle f, f \rangle_{\Fcal_k}}$. If $k$ is a kernel on a space $\mathcal{X}$ (say $\simg$) and $\ell$ is a kernel on $\mathcal{Y}$ (say $\RR^p$), then $m := k\times \ell$ is a kernel on the space $\mathcal{Z} = \mathcal{X}\times\mathcal{Y}$; that is, for $z=(x,y),z'=(x',y')$, we have $m(z,z')=k(x,x')\ell(y,y')$. Naturally, we can recurse this process to define kernels on domains involving a variety of data types, showcasing their generality.
For compact metric spaces, 
a continuous kernel $\kernel$ is called \emph{universal} if the RKHS $\Fcal_k$ defined by it is
dense, in $L_\infty$ norm, in the space of continuous functions~\citep{steinwart2002influence}.
In our setting, a kernel $k$ is universal if and only if any function
$f:\simg \to \RR$ can be written as a linear combination of functions
$\kernel(\pi, \cdot)$, with $\pi \in \simg$, that is $\Fcal_k$ contains all possible functions.

\paragraph{Feature maps.} Mercer's theorem \citep[Proposition 2.11]{learningkernels} guarantees that for any kernel $k$ there exists a
Hilbert space $\Hcal$ and a \emph{feature map} $\Phi \colon \simg
\rightarrow \Hcal$ such that
\begin{align*}
\kernel(\sigma, \sigma') = \langle \Phi(\sigma), \Phi(\sigma')
\rangle_{\Hcal} ,\quad \forall \sigma, \sigma' \in \simg.
\end{align*}
Feature maps are not unique, and many different feature maps may give
rise to the same kernel.  The reproducing property states that
$\langle k(\sigma,\cdot), k(\sigma',\cdot) \rangle_{\Fcal_k} =
k(\sigma,\sigma')$. Hence, $k(\sigma,\cdot)$ is one example of a
feature map.  Kernels correspond to inner products in
appropriate feature spaces, and can be thought of as a measure of
similarity between rankings. The kernels that we consider
have feature maps that embed $\simg$ into $\RR^m$, for some finite
dimension $m$, in which case the inner product $\langle \cdot, \cdot
\rangle_{\Hcal}$ represents the standard $m$-dimensional Euclidean
inner product, and the induced norm $\|\cdot\|_\Hcal :=
\sqrt{\cdot,\cdot}$ is the $m$-dimensional Euclidean norm.

\paragraph{Right-invariance.} A bivariate function $F\colon \simg
\times \simg \rightarrow \RR$ is called \emph{right-invariant} if
$F(\sigma, \sigma') = F(\sigma\circ \pi, \sigma'\circ \pi)$ for all
permutations $\sigma, \sigma', \pi \in \simg$. By setting $\pi =
\sigma^{-1}$, we see that this property holds if and only if
$F(\sigma, \sigma') = f(\sigma'\circ \sigma^{-1})$ for some function
\mbox{$f: \simg \rightarrow \RR$.} For kernels, we overload notation
by using $\kernel$ to refer to both $F$ and $f$ by $\kernel$ (usage will be clear
from the context).  Right-invariance of kernels is desirable for
applications involving rankings since it ensures that the kernel
values remain unchanged by a relabeling of the objects being
ranked. Furthermore, as we discuss later, right-invariance enables us
to use Fourier analysis to study the kernels.

\paragraph{Kendall's and Mallows' kernels.} All the kernels that
we study in this paper measure the similarity between two rankings
through the number of pairs of objects that they order in the same way
or in opposite ways. More precisely, letting $n_d(\sigma, \sigma')$
and $n_c(\sigma, \sigma')$ denote (respectively) the number of
\emph{discordant} and \emph{concordant} pairs between permutations
$\sigma$ and $\sigma'$, we have the relations
\begin{subequations}
\begin{align}
  n_d(\sigma,\sigma') & \defn \sum_{i< j}\left[\indi_{\{\sigma(i)<
      \sigma(j)\}}\indi_{\{\sigma'(i)>\sigma'(j)\}} +
    \indi_{\{\sigma(i)>
      \sigma(j)\}}\indi_{\{\sigma'(i)<\sigma'(j)\}}\right], \quad
  \mbox{and} \\
\label{EqnGobiManchurian}
n_c(\sigma, \sigma') & = {d \choose 2} - n_d(\sigma, \sigma'),
\end{align}
\end{subequations}
where equality~\eqref{EqnGobiManchurian} follows because any pair of
indices is either concordant or discordant.  Of particular interest
are \emph{Kendall's kernel} denoted by $k_\tau$, and \emph{Mallows'
  kernel} denoted by $k_m^\band$, where $\band$ is a user-chosen
bandwidth parameter.  They each depend only on the number of
discordant/concordant pairs, and are defined by
\begin{subequations}
\begin{align}
\label{EqnDefnKendall}
\kendall (\sigma, \sigma') & \defn \frac{n_c(\sigma, \sigma') -
  n_d(\sigma, \sigma')}{{d\choose 2}}, \quad \mbox{and} \\
\label{EqnDefnMallows}
\mallows (\sigma, \sigma') &\defn \exp\left({-\band n_d(\sigma, \sigma')}\right).
\end{align}
\end{subequations}

\cite{jiao2015kendall} show that $\kendall$ and $\mallows$ are indeed
kernels and that they can be computed in $\Ocal(\numObj \log \numObj)$
time.  It is not hard to check that the number of discordant pairs
between two permutations is right-invariant, and in fact $n_d(\sigma,
\sigma') = \ii(\sigma' \circ \sigma^{-1})$, where $\ii(\pi)$ denotes
the number of inversions of the permutation $\pi$ (see the
supplementary material for a short proof of this fact).

Therefore, kernels that depend only on the number of discordant or
concordant pairs are right-invariant, which is one of the reasons
behind our particular interest in Kendall's and Mallows' kernels.
Another reason is that the Kendall kernel corresponds almost directly
to the classical Kendall-$\tau$ metric on $\simg$, and the Mallows'
kernel is reminiscent of the popular Mallows' distribution over
$\simg$. Later, we will introduce a family of polynomial kernels that
interpolate between these two kernels.  While these are not the only
kernels of interest, they are natural starting points.

\paragraph{Kernel regression on $\simg$.}

Consider the problem of \emph{kernel ridge regression}, where we
fit a nonlinear model over $\simg$.  This implicitly corresponds to
fitting a linear model in the feature space $\Hcal$. Given a set of $n$ observations
$\{(\pi_i,y_i)\}_{i=1}^n$, kernel ridge regression fits a
function $f\colon \simg \rightarrow \RR$ to the data by solving the
optimization problem
\begin{align}
\label{eqn:opt1}
f^* ~:=~ \arg\min_{f \in \Fcal_k} \sum_{i=1}^n (y_i - f(\pi_i))^2 +
\lambda \|f\|^2_{\Fcal_k},
\end{align}
where $\lambda \in \RR^+$ is a regularization parameter.

 If $k$ is universal, then the estimate $f^*$ can
 approximate any function $f:\simg \to \RR$ arbitrarily
 well. Conversely, if $\Fcal_k$ is not universal, then we may suffer
 from an approximation error even in the limit of infinite data.

 \paragraph{Representer theorem.} Note that kernel ridge regression is
 never directly performed as written above---indeed, the representer
 theorem~\citep{KimWah71} implies that $f^*$ lies in the span of
 $\{k(\pi_i,\cdot)\}_{i=1}^n$, meaning that the optimum $f^* =
 \sum_{i=1}^n w^*_i k(\sigma_i,\cdot)$ for some vector $w^*$. This
 allows us to rewrite the optimization problem~\eqref{eqn:opt1} as:
\begin{align*}
w^* := \arg\min_{w \in \RR^n} \|y - M_k w\|_2^2 + \lambda \|w\|_2^2,
\end{align*}
where $M_k$ is the $n \times n$ Gram matrix whose entries are
$M_{k,ij} = k(\nu_i,\nu_j)$. This in turn is solved by matrix
inversion: $w^* = (M_k + \lambda I_n)^{-1}y$.

\paragraph{Characteristic kernels.} Any kernel on a domain $\Xcal$ induces a semi-metric on the set of
probability distributions on $\Xcal$, known as the \emph{maximum
  mean discrepancy}~\citep[][]{Mul97,Rac13,gretton2012kernel}, which in our setting of 
$\Xcal = \simg$ is given by
\begin{align}
\label{EqMMD}
\mmd_{\kernel} (\probP, \probQ) = \sup_{ \|f\|_{\Fcal_k} \leq 1 }\EE_{\sigma
  \sim P} [f(\sigma)] - \EE_{\pi \sim Q}[f(\pi)].
\end{align}
One can define the mean embedding of $\probP$ using a feature map $\Phi\colon \simg \rightarrow \Hcal$ of the kernel $\kernel$, as $\mu_{\kernel, \probP}= \EE_{\sigma \sim \probP} [\Phi(\sigma)]$. 
Elementary computations \citep[see][]{gretton2012kernel} show that 
\begin{equation}
\label{EqMMDmeans}
\mmd_{\kernel} (\probP, \probQ) = \|\mu_{\kernel, \probP} - \mu_{\kernel, \probQ}\|_\Hcal.
\end{equation}
The kernel is said
to be \emph{characteristic} if $\mmd_\kernel$ actually defines a
metric on the set of probability distributions---that is, if
$\mmd_\kernel(\probP, \probQ) = 0$ if and only if $\probP = \probQ$.

\paragraph{Two-sample testing on $\simg$.}

Let $\probP$ and $\probQ$ be probability distributions over
$\simg$, and consider testing the null $H_0 : P = Q$ against the alternative $H_1 : P \neq Q$, using samples $\alpha_1$, $\alpha_2$, \ldots, $a_{n_1} \stackrel{i.i.d.}{\sim} \probP$ and  $\beta_1$, $\beta_2$, \ldots, $\beta_{n_2} \stackrel{i.i.d.}{\sim}  \probQ$.
One approach to this testing problem is to estimate a (semi-)metric between $P$ and $Q$, and reject $H_0$ if the estimate is large.
For example, \cite{gretton2012kernel} define the statistic
\begin{equation}
\label{eq:u-statistic}
T_\kernel(\alpha,\beta) = \frac1{n_1(n_1 -1)}\sum_{i\neq j} \kernel(\alpha_i,\alpha_j) + \frac1{n_2(n_2-1)}\sum_{i\neq j}\kernel(\beta_i,\beta_j) - \frac2{n^2}\sum_{i=1}^{n_1}\sum_{j=1}^{n_2}\kernel(\alpha_i, \beta_j),
\end{equation}
which is an unbiased estimator of $\mmd_\kernel^2$, and define the associated test $T_\kernel(\alpha, \beta) > t^*$ for some threshold $t^*$ (which can be determined, for example, by bootstrap or permutation testing). We will use this nonparametric framework for two-sample testing as a jumping-off point in our investigation of the statistical properties of kernels on permutations.
Specifically, we will investigate the interpretability of this class of tests. Given a kernel $k$, what kind of differences between $\probP$ and $\probQ$ is the test sensitive to? If the null hypothesis is \emph{not} rejected, does that mean the two probability distributions are equal or that they simply have the same low-order moments (for some appropriate notion of moment)? 

One may understand the  test and the $\mmd_\kernel$ semi-metric by studying the kernel $\kernel$.
For example, since $\mmd_\kernel$ is not always a metric, 
this test would have trivial power against alternatives  $\probP \neq \probQ$ whenever $\mmd_\kernel(\probP, \probQ) = 0$. Hence, it is useful to understand when 
the $\mmd_\kernel$ could be equal to zero, even though $\probP \neq \probQ$. 
The results presented in the next section offer answers to these questions. 
For example, Proposition~\ref{PropKendallMap} shows that the
MMD induced by the Kendall kernel is \emph{not} a metric, and in fact it is far from being a metric. 
In sharp contrast, Theorem~\ref{ThmMallows} guarantees that MMD induced by the Mallows kernel is a metric; i.e.,
$\mmd_{\mallows}(\probP, \probQ) = 0$ only when $\probP = \probQ$.

\section{Feature spaces of the Kendall and Mallows kernels}
\label{SecFeatures}

\cite{jiao2015kendall} constructed a feature map  $\Phi_\tau\colon
\simg \rightarrow \RR^{{\numObj \choose 2}}$ for the Kendall kernel defined by:
\begin{align}
\Phi_\tau(\sigma)_{\{i,j\}} = \sqrt{{d\choose
    2}^{-1}}\left(2\indi(\sigma(i) < \sigma(j)) - 1\right) \qquad \mbox{for each $i < j$,}
\end{align}
which is easily seen to satisfy $\kendall(\sigma, \sigma') = \Phi_\tau(\sigma)^\top \Phi_\tau(\sigma')$. 
Using this map we can give an interpretation of the MMD operator of Eq.~\eqref{EqMMDmeans}.
Fix an ordering $\sigma_1$, $\sigma_2$, \ldots, $\sigma_{\numObj!}$ of the elements of $\simg$ and fix an ordering $t_1$, $t_2$, \ldots, $t_{{\numObj \choose 2}}$ of the tuples $(a,b)$ with $a < b$ and $a,b\in [\numObj]$. Denote the set of tuples by $\twosets^*$. 
We define $M_\tau$ to be the $\RR^{{\numObj \choose 2} \times \numObj!}$ matrix whose columns are indexed by the rankings $\sigma_j$, whose rows are indexed by the tuples $t_i$, and whose  $j$-th column is the vector $\Phi_\tau(\sigma_j)$. With this notation, if we view $\probP$ and $\probQ$ as vectors in $[0,1]^{\numObj!}$, the MMD in \eqref{EqMMDmeans} is equal to $\|M_\tau (P - Q)\|_2$. 

\noindent We also define the matrix $A_\tau \in \{0,1\}^{{\numObj \choose 2} \times \numObj!}$ with columns and rows indexed similarly, and entries
\begin{align*}
\left(A_\tau\right)_{\{a,b\}, \sigma} = \left\{
\begin{array}{cc}
1 & \text{if } \sigma(a) < \sigma(b)\\
0 & \text{if } \sigma(a) > \sigma(b).
\end{array}\right.
\end{align*} 
We then have the following proposition. 

\begin{proposition}
\label{PropKendallMap}
The maximum mean discrepancy $\mmd_{\kendall}$ between two probability
distributions $\probP$, $\probQ$ on $\simg$ is zero if and only if
$A_\tau(\probP - \probQ) = 0$. Moreover, the matrix $A_\tau$ has rank
${\numObj \choose 2}$.
\end{proposition}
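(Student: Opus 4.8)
The plan is to prove two claims: first, the equivalence $\mmd_{\kendall}(\probP,\probQ)=0 \iff A_\tau(\probP-\probQ)=0$, and second, the rank computation $\operatorname{rank}(A_\tau) = \binom{\numObj}{2}$. For the equivalence, I would begin from the identity established just before the statement, namely $\mmd_{\kendall}(\probP,\probQ) = \|M_\tau(\probP-\probQ)\|_2$, so that $\mmd_{\kendall}=0$ iff $M_\tau(\probP-\probQ)=0$. The key observation is that the feature map entries $\Phi_\tau(\sigma)_{\{i,j\}} = \sqrt{\binom{\numObj}{2}^{-1}}\,(2\indi(\sigma(i)<\sigma(j))-1)$ are an affine reparametrization of the indicator entries of $A_\tau$: concretely, $M_\tau = \sqrt{\binom{\numObj}{2}^{-1}}\,(2A_\tau - J)$ where $J$ is the all-ones matrix of the same dimensions. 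Since $\probP$ and $\probQ$ are probability vectors, each sums to one, so $J(\probP-\probQ)=0$. Hence $M_\tau(\probP-\probQ) = 2\sqrt{\binom{\numObj}{2}^{-1}}\,A_\tau(\probP-\probQ)$, and because the scalar is nonzero, the kernels of $M_\tau$ and $A_\tau$ agree on the hyperplane of mean-zero vectors. This gives the equivalence cleanly.

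\textbf{For the rank claim}, I would show $\operatorname{rank}(A_\tau)=\binom{\numObj}{2}$ by exhibiting that the $\binom{\numObj}{2}$ rows are linearly independent (the rank cannot exceed $\binom{\numObj}{2}$ since that is the number of rows). Each row is indexed by a pair $\{a,b\}$, and its entries record, across all permutations $\sigma$, whether $a$ precedes $b$. The natural strategy is to find, for each pair $\{a,b\}$, a permutation that distinguishes it from the others, or more robustly, to suppose a linear combination $\sum_{\{a,b\}} c_{\{a,b\}}\,(A_\tau)_{\{a,b\},\cdot} = 0$ and deduce all $c_{\{a,b\}}=0$. Evaluating this combination at a permutation $\sigma$ gives $\sum_{\{a,b\}} c_{\{a,b\}}\,\indi(\sigma(a)<\sigma(b)) = 0$ for every $\sigma$. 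I would then consider the effect of swapping two adjacent ranks: comparing $\sigma$ with the permutation $\sigma'$ obtained by transposing the objects in positions $r$ and $r+1$ flips exactly one indicator, the one for the pair occupying those two positions, leaving all others fixed. Taking the difference of the two equations isolates a single coefficient $c_{\{a,b\}}$ and forces it to zero. Running this over all pairs establishes independence.

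\textbf{The main obstacle} is making the adjacent-transposition argument fully rigorous: one must verify that every pair $\{a,b\}$ can be made to occupy adjacent rank positions in some permutation, so that the isolating difference argument reaches every coefficient, and one must confirm that the swap flips precisely the indicator for that pair and no other. Both facts are elementary --- any pair of objects can be placed in consecutive ranks, and an adjacent transposition alters the relative order of exactly the two items it moves --- but the bookkeeping should be stated carefully. An alternative that avoids the transposition bookkeeping is to directly display $\binom{\numObj}{2}$ columns of $A_\tau$ that form an invertible submatrix; however, the adjacent-swap approach is cleaner because it works uniformly over all pairs. I expect the equivalence to be essentially immediate once the affine relation $M_\tau = \sqrt{\binom{\numObj}{2}^{-1}}(2A_\tau-J)$ and the mean-zero property of $\probP-\probQ$ are in place, so the bulk of the effort lies in the rank computation.
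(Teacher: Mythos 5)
Your proof is correct. The first half coincides with the paper's: the paper likewise reduces $\mmd_{\kendall}(\probP,\probQ)=\|M_\tau(\probP-\probQ)\|_2$ to the identity $\tfrac{1}{2}\sqrt{\binom{\numObj}{2}}\,M_\tau(\probP-\probQ)=A_\tau(\probP-\probQ)$, which is exactly your affine relation $M_\tau=\sqrt{\binom{\numObj}{2}^{-1}}(2A_\tau-J)$ combined with $J(\probP-\probQ)=0$. For the rank claim, however, you take a genuinely different route. The paper works with the \emph{columns} of $A_\tau$: after fixing a lexicographic order on the pairs, it inductively constructs explicit permutations $\pi_j$ whose columns are the staircase vectors $v_j=\sum_{i=1}^j e_i$, which are trivially linearly independent, so the column space is all of $\RR^{\binom{\numObj}{2}}$. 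You instead show the $\binom{\numObj}{2}$ \emph{rows} are linearly independent: a dependence $\sum_{\{a,b\}}c_{\{a,b\}}\indi(\sigma(a)<\sigma(b))=0$ for all $\sigma$ is killed by differencing across an adjacent transposition of ranks, which flips exactly the one indicator for the pair occupying consecutive ranks and isolates $c_{\{a,b\}}$. Both arguments are elementary and complete; yours is more uniform and symmetric (one local perturbation handles every pair identically, with only the standard fact that an adjacent swap changes exactly one pairwise comparison to verify), while the paper's is constructive and yields explicit certificate permutations whose columns form a triangular basis, at the cost of more delicate inductive bookkeeping about which coordinates each swap affects.
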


Straightforward algebraic manipulations show that
$\frac{1}{2}\sqrt{{\numObj \choose 2}}M_\tau(P - Q) = A_\tau(P - Q)$,
proving the first part of the above proposition, and the second part
is proved in the supplement.  We remark that $(A_\tau P)_{\{a,b\}} =
\probP(\sigma(a) < \sigma(b))$, and hence the $\mmd_{\kendall}$
corresponds to the Euclidean distance between the vectors of
probabilities of the events $\{\sigma: \sigma (a) < \sigma(b)\}$ under
the distributions $\probP$ and $\probQ$. As a parallel to the linear
kernel in $\RR^m$, the Kendall kernel detects a difference between two
probability distributions only if they differ in mean, where we define
the mean as the vectors of probabilities of events $\{\sigma \colon
\sigma(a) < \sigma(b)\}$.

How many probability distributions have the same mean embedding under
the Kendall kernel as $\probP$? A probability $\probQ$ over $\simg$ is
a vector in $\RR^{\numObj!}$ that is contained in the unit simplex, a
subset of a hyperplane of dimension $\numObj! -
1$. Proposition~\ref{PropKendallMap} shows that for each $\probP$ in
the interior of the unit simplex of $\RR^{\numObj!}$ there is a
subspace $V \in \RR^{\numObj!}$ of dimension $\numObj! - {\numObj
  \choose 2} - 1$ such that for each $\gamma \in V$ there exists
$\epsilon > 0$ such that $\probP + \epsilon \gamma$ is a probability
distribution over $\simg$ and $\mmd_{\kendall}(\probP +
\epsilon\gamma, \probP) = 0$. In other words, as $\numObj$ increases,
the fraction of the directions that the Kendall kernel cannot
distinguish goes to one. This observation shows that the Kendall
kernel is far from being a metric on the probability simplex in
$\RR^{\numObj!}$. We offer a Fourier transform perspective on this
fact in Theorem~\ref{ThmKendallTransform}; showing in particular that the Kendall
kernel can detect only low-frequency differences between two
probability distributions.

We next describe a finite-dimensional feature map for the Mallows
kernel.

\begin{proposition}
\label{PropMallowsMap}
The feature map of the Mallows kernel $\mallows$ is given by a map
$\Phi_m \colon \simg \to \Pcal(\twosets^*)$, where $\Pcal(\twosets^*)$
denotes the power set of $\twosets^*$. If $s_1, s_2, \ldots, s_r$ are
distinct elements of $\twosets^*$, we have
\begin{equation} 
\label{eq:feature_map_mallows}
\Phi_m(\sigma)_{s_1s_2\ldots s_r} = \left(\frac{1 +
  \exp({-\band})}{2}\right)^{\frac{1}{2}{\numObj \choose 2}}
\left(\frac{1 - \exp({-\band})}{1 +
  \exp({-\band})}\right)^{\frac{r}{2}}\prod_{i = 1}^r
\EmbdNor{}(\sigma)_{s_i},
\end{equation}
where $\EmbdNor{}(\sigma)_{s_i} = 2\indi_{\{\sigma(a_i) <
  \sigma(b_i)\}} - 1$ when $s_i = (a_i, b_i)$, and
$\psi(\sigma)_\emptyset = 2^{-\frac{1}{2}{\numObj \choose 2}}(1 +
\exp({-\band}))^{\frac{1}{2}{\numObj \choose 2}}$ for all $\sigma \in
\simg$.
\end{proposition}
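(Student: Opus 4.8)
The plan is to realize the Mallows kernel as a product, over the ${\numObj \choose 2}$ unordered pairs of objects, of elementary two-dimensional inner products, and then to fold these factors into a single tensor-product feature map. To begin, I would record the sign features $\EmbdNor{}(\sigma)_s = 2\indi_{\{\sigma(a) < \sigma(b)\}} - 1 \in \{-1, +1\}$ for each pair $s = (a,b) \in \twosets^*$, and note the elementary fact that the pair $s$ is \emph{discordant} between $\sigma$ and $\sigma'$ precisely when $\EmbdNor{}(\sigma)_s \EmbdNor{}(\sigma')_s = -1$. Consequently $n_d(\sigma, \sigma') = \sum_{s \in \twosets^*} \tfrac{1}{2}\big(1 - \EmbdNor{}(\sigma)_s \EmbdNor{}(\sigma')_s\big)$, and substituting into the definition~\eqref{EqnDefnMallows} factorizes the kernel as
\begin{equation*}
\mallows(\sigma, \sigma') = \prod_{s \in \twosets^*} \exp\!\Big(-\tfrac{\band}{2}\big(1 - \EmbdNor{}(\sigma)_s \EmbdNor{}(\sigma')_s\big)\Big).
\end{equation*}

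The key step is to represent each factor as a genuine inner product in $\RR^2$. Writing $c_+ = \big(\tfrac{1 + \exp(-\band)}{2}\big)^{1/2}$ and $c_- = \big(\tfrac{1 - \exp(-\band)}{2}\big)^{1/2}$ (both real since $\band \geq 0$), I would set $v_s(\sigma) = \big(c_+,\, c_-\,\EmbdNor{}(\sigma)_s\big) \in \RR^2$ and verify the two-case identity $\langle v_s(\sigma), v_s(\sigma') \rangle = c_+^2 + c_-^2\,\EmbdNor{}(\sigma)_s \EmbdNor{}(\sigma')_s$, which evaluates to $1$ when $s$ is concordant and to $\exp(-\band)$ when $s$ is discordant---exactly the corresponding factor above. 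Taking the tensor product $\Phi_m(\sigma) = \bigotimes_{s \in \twosets^*} v_s(\sigma)$ and using that the inner product of tensor products is the product of inner products then yields $\langle \Phi_m(\sigma), \Phi_m(\sigma')\rangle = \prod_s \langle v_s(\sigma), v_s(\sigma')\rangle = \mallows(\sigma, \sigma')$, as desired.

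Finally I would translate the abstract tensor into the stated coordinate formula. The tensor product lives in $(\RR^2)^{\otimes {\numObj \choose 2}}$, whose $2^{{\numObj \choose 2}}$ coordinates are naturally indexed by subsets of $\twosets^*$: a coordinate selects the second ($c_-$) slot of $v_s$ for each $s$ in a subset $S = \{s_1, \ldots, s_r\}$ and the first ($c_+$) slot otherwise, identifying $\Hcal$ with $\RR^{\Pcal(\twosets^*)}$. Its value is $\Phi_m(\sigma)_S = c_+^{{\numObj \choose 2} - r} c_-^{\,r} \prod_{i=1}^r \EmbdNor{}(\sigma)_{s_i}$, and a one-line rearrangement of the exponents---pulling out $c_+^{{\numObj \choose 2}}$ and forming the ratio $c_-/c_+$---matches the claimed expression~\eqref{eq:feature_map_mallows}; the empty subset $r = 0$ reproduces the stated constant $\psi(\sigma)_\emptyset$.

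I do not anticipate a serious obstacle: the single genuine idea is the two-dimensional embedding $v_s$ that linearizes each factor, after which everything reduces to the tensorization identity and bookkeeping. The only points demanding care are checking that $c_-$ is real (guaranteed by $\band \geq 0$) and matching the powers of $\tfrac{1 + \exp(-\band)}{2}$ and $\tfrac{1 - \exp(-\band)}{1 + \exp(-\band)}$ in the target formula against the raw exponents $c_+^{{\numObj \choose 2} - r} c_-^{\,r}$.
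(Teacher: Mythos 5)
Your proof is correct, but it takes a genuinely different route from the paper. The paper does not factorize the kernel at all: it starts from the $\band$-normalized polynomial kernel $\LamPolyKer{\power}{\band}$, expands it multinomially, groups the terms by subsets $A \subset \twosets^*$ using $\EmbdNor{}(\sigma)_{t_i}^2 = 1$, and then identifies the coefficient $\delta(\power, |A|)$ of each product $\EmbdNor{}(\sigma)_A \EmbdNor{}(\sigma')_A$ via a generating function, finally passing to the limit $\power \to \infty$ to recover the Mallows kernel. Your argument instead uses the identity $n_d(\sigma,\sigma') = \sum_{s} \tfrac{1}{2}\bigl(1 - \EmbdNor{}(\sigma)_s \EmbdNor{}(\sigma')_s\bigr)$ to write $\mallows$ as a product of $\binom{\numObj}{2}$ two-valued factors, linearizes each factor as an inner product in $\RR^2$ via the vectors $\bigl(c_+, c_- \EmbdNor{}(\sigma)_s\bigr)$, and tensorizes; the two-case check ($c_+^2 + c_-^2 = 1$ on concordant pairs, $c_+^2 - c_-^2 = e^{-\band}$ on discordant ones) and the exponent bookkeeping $c_+^{\binom{\numObj}{2}-r} c_-^r = c_+^{\binom{\numObj}{2}} (c_-/c_+)^r$ both verify, and the $r=0$ case matches the stated constant. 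Your route is shorter, self-contained, and makes the strict positivity of each rank-two factor (hence the positive definiteness of the kernel) transparent; what the paper's longer route buys is an explicit bridge to the polynomial kernels, exhibiting the Mallows feature map as the literal limit of the $\LamPolyKer{\power}{\band}$ feature maps, which the authors lean on later when arguing that polynomial kernels of degree at least $\binom{\numObj}{2}$ already contain all Mallows features and when proving Theorem~\ref{ThmMallows}.
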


While mean embeddings with respect to the Kendall kernel correspond to
the probabilities of the events $\{\sigma: \sigma (a) < \sigma(b)\}$,
the mean embeddings with respect to the Mallows kernel correspond to
the probabilities of the events defined by prescribing all orderings
on subsets $\{a_1, a_2, \ldots, a_k\}$ of objects of $[d]$. In
comparison to the Kendall kernel, it is apparent from
Eq.~\eqref{eq:feature_map_mallows} that the Mallows kernel captures
more features of probability distributions, and hence it can
distinguish more pairs of distributions than the Kendall kernel. In
fact, Theorem~\ref{ThmMallows} to be stated in the sequel shows
that the Mallows kernel is characteristic.


\section{Fourier analysis of the Kendall and Mallows kernels}
\label{SecKendallMallowsFreq}

We start by setting out the basic definitions and concepts that will
allow us to state our results concerning the Fourier transforms of
the Kendall and Mallows kernels; elementary treatments of these
concepts are provided by~\cite{kondor2008group}
and~\cite{huang2009fourier}, with a concise summary given
by~\cite{kondor2010ranking}.  Our proofs actually require more
extensive machinery from the theory of Fourier analysis on groups, as
found, for example,
in~\cite{diaconis1988group},~\cite{sagan2013symmetric},
or~\cite{fulton1991representation}.  We introduce these more advanced
concepts in the supplementary material, which also contains full proofs
of our results.

The Fourier transform of a function $f\colon \simg \rightarrow \CC$
takes the form
\begin{align}
\label{EqFourierTransform}
\widehat f(\rho_{\lambda}) \defn \sum_{\sigma \in \simg} f(\sigma)
\rho_{\lambda}(\sigma),
\end{align}
where $\rho_\lambda$ is a matrix-valued function defined shortly. 
As a contrast with the Fourier transform for functions defined over $\RR$,
instead of being indexed by a frequency $\xi$, the Fourier
transform is indexed by $\lambda$, which is a \emph{partition} of
$d$---a non-increasing sequence of integers that sum to
$\numObj$. Furthermore, instead of the standard exponential basis functions $\exp({i \xi x})$,
the terms $\rho_\lambda$ are functions from $\simg$ to $\CC^{d_\lambda \times d_\lambda}$. 

Let us make these notions more precise. A \emph{representation} of
the symmetric group is a matrix-valued function $\rho\colon \simg
\rightarrow \CC^{d_\rho \times d_\rho}$ such that $\rho(\sigma)$ is
invertible and $\rho(\sigma \circ \sigma') = \rho(\sigma)\rho(\sigma')$ for
all permutations $\sigma, \sigma' \in \simg$. The integer $d_\rho$ is
called the dimension of the representation. As an immediate
consequence of the definitions, it follows that
\begin{align*}
\rho(e) = I_{d_\rho} \quad \mbox{and} \quad \rho(\sigma)^{-1} =
\rho(\sigma^{-1}) \qquad \mbox{for all $\sigma \in \simg$.}
\end{align*}
A representation $\rho$ is \emph{reducible} if it is equivalent to the
direct sum of two representations. To be more explicit, a
representation $\rho$ is reducible if there exist two representations
$\rho_1$ and $\rho_2$ and an invertible matrix $C \in \CC^{d_\rho
  \times d_\rho}$ such that
\begin{align*}
\rho(\sigma) = C^{-1} \left[\rho_1(\sigma)\oplus \rho_2(\sigma)\right]
C = C^{-1} \left(\begin{array}{cc} \rho_1(\sigma) &
  \mathbf{0}\\ \mathbf{0} & \rho_2(\sigma)
\end{array}\right)
C \quad \text{ for all } \sigma \in \simg.
\end{align*} 
A representation that is not reducible is called
\emph{irreducible}. For brevity, we refer to irreducible
representations as \emph{irreps}.  The symmetric group has a finite
number of distinct irreps (an explanation of the meaning of
``distinct'' is provided in the supplementary material), and these
irreps have a standard indexing by finite sequences of positive
integers $\lambda = (\lambda_1, \lambda_2, \ldots, \lambda_r)$ such
that $\lambda_1 \geq \lambda_2 \geq \ldots \geq \lambda_r$ and $\sum_{i =
  1}^r \lambda_i = \numObj$. Such sequences are called
\emph{partitions} of $\numObj$ and $\lambda \vdash \numObj$ means that
$\lambda$ is a partition of $\numObj$.

Returning to equation~\eqref{EqFourierTransform} and using the
terminology just introduced, the Fourier transform of a function on
the symmetric group can be described as a mapping from the irreps
$\rho_\lambda$ to matrices in $\CC^{d_\lambda \times d_\lambda}$. This
version of the Fourier transform shares many similar properties with
its counterpart over real numbers, including the Fourier inversion
formula and the Plancherel formula.  For
future reference, we note that Bochner's theorem in this context states that a 
a right-invariant kernel $k \colon \simg \times \simg \rightarrow \CC$
is positive definite if and only if the matrix $\widehat k(\rho_\lambda)$
is positive semi-definite for all partitions $\lambda \vdash \numObj$.
For more on these properties and other results needed in this paper, 
we refer the reader to the supplementary material.

Before turning to our main results, it is convenient to introduce some
notation for the standard partial ordering of the partitions of $\numObj$. Given
any two partitions $\lambda = (\lambda_1, \lambda_2, \ldots,
\lambda_r)$ and $\mu = (\mu_1, \mu_2,\ldots, \mu_l)$, we say that
$\lambda \unrhd \mu$ if $\sum_{i = 1}^{j} \lambda_i \geq \sum_{i =
  1}^j \mu_i$ for all $1 \leq j \leq \min\{l,r\}$. We say $\lambda
\lhd \mu$ whenever it is not true that $\lambda \unrhd \mu$. The
irreps of the symmetric group inherit the same partial ordering.

Equipped with this background, we now turn to the statements of our
results on the spectral properties of the Kendall and Mallows kernels, 
as well as a discussion of some of their consequences.
We begin with a theorem that characterizes the spectrum of the
Kendall kernel.

\begin{theorem}
\label{ThmKendallTransform}
The Kendall kernel has the following properties:
\begin{enumerate}[leftmargin=*]
\item[(a)] When $d = 2$, the Fourier transform of the Kendall kernel
  is equal to $0$ at $\rho_{(2)}$ and equal to $2$ at $\rho_{(1,1)}$.
\item[(b)] When $\numObj \geq 3$, the Fourier transform
  $\widehat\kendall$ of the Kendall kernel is zero at all irreducible
  representations except for $\rho_{(d-1,1)}$ and
  $\rho_{(d-2,1,1)}$. Furthermore, at both of the latter two
  representations, the Fourier transform $\widehat \kendall$ has rank
  one.
\end{enumerate}
\end{theorem}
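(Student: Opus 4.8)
The plan is to exploit right-invariance to reduce the Fourier transform of $\kendall$ to that of a single scalar function on $\simg$, and then to recognize that function as a matrix coefficient of a low-dimensional, explicitly reducible representation. Writing the kernel in right-invariant form via $n_d(\sigma,\sigma')=\ii(\sigma'\circ\sigma^{-1})$, the relevant function is $f(\pi)=\binom{\numObj}{2}^{-1}\bigl(\binom{\numObj}{2}-2\,\ii(\pi)\bigr)$, and a one-line rewriting of the inversion count gives
\[
f(\pi)\;=\;\binom{\numObj}{2}^{-1}\sum_{a<b}\sign\bigl(\pi(b)-\pi(a)\bigr).
\]
The first key step is to identify this sum as a matrix coefficient of the action $\Lambda^2$ of $\simg$ on the exterior square $\bigwedge^2\RR^{\numObj}$, where $\pi\cdot(e_a\wedge e_b)=e_{\pi(a)}\wedge e_{\pi(b)}$. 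Taking $w=\sum_{a<b}e_a\wedge e_b$ and the inner product making $\{e_a\wedge e_b\}_{a<b}$ orthonormal, a direct check shows $\sum_{a<b}\sign(\pi(b)-\pi(a))=\langle w,\Lambda^2(\pi)\,w\rangle$, since each $e_{\pi(a)}\wedge e_{\pi(b)}$ is $\pm$ a basis vector carrying exactly the sign $\sign(\pi(b)-\pi(a))$. Note that $\Lambda^2$ is a real orthogonal representation because it permutes the orthonormal basis up to signs.

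The second step is to decompose $\Lambda^2$ into irreps. Using $\RR^{\numObj}\cong\rho_{(\numObj)}\oplus\rho_{(\numObj-1,1)}$ together with the standard fact that $\bigwedge^2\rho_{(\numObj-1,1)}\cong\rho_{(\numObj-2,1,1)}$, one obtains (for $\numObj\ge 4$)
\[
\textstyle\bigwedge^2\RR^{\numObj}\;\cong\;\rho_{(\numObj-1,1)}\;\oplus\;\rho_{(\numObj-2,1,1)},
\]
each with multiplicity one; the cases $\numObj=3$ (where $\rho_{(\numObj-2,1,1)}=\rho_{(1,1,1)}$ is one-dimensional) and $\numObj=2$ (where only $\rho_{(1,1)}$ survives) are degenerate versions of the same statement. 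Choosing an orthonormal basis adapted to this decomposition makes $\Lambda^2(\pi)$ block-diagonal, so $\langle w,\Lambda^2(\pi)w\rangle=\sum_{\mu}u_\mu^\top\rho_\mu(\pi)\,u_\mu$, where the sum runs over $\mu\in\{(\numObj-1,1),(\numObj-2,1,1)\}$ and $u_\mu$ is the component of $w$ in the $\rho_\mu$-block. Substituting into $\widehat f(\rho_\lambda)=\sum_\pi f(\pi)\rho_\lambda(\pi)$ and applying Schur orthogonality in its real-orthogonal form $\sum_\pi\rho_\mu(\pi)_{ab}\rho_\lambda(\pi)_{cd}=\tfrac{\numObj!}{d_\lambda}\delta_{\mu\lambda}\delta_{ac}\delta_{bd}$ collapses everything to
\[
\widehat f(\rho_\lambda)\;=\;\binom{\numObj}{2}^{-1}\frac{\numObj!}{d_\lambda}\,u_\lambda u_\lambda^\top,
\]
which vanishes unless $\lambda\in\{(\numObj-1,1),(\numObj-2,1,1)\}$ and is a rank-one positive semidefinite matrix at those two partitions (consistent with Bochner's theorem and the positive-definiteness of $\kendall$) — provided $u_\lambda\neq 0$.

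The remaining, and most delicate, point is to show that neither $u_{(\numObj-1,1)}$ nor $u_{(\numObj-2,1,1)}$ vanishes, so that the rank is exactly one and not zero. I would use the concrete model $W_1=\mathbf{1}\wedge U_1$ and $W_2=\bigwedge^2 U_1$, where $\mathbf{1}=\sum_a e_a$ and $U_1=\{x:\sum_a x_a=0\}\cong\rho_{(\numObj-1,1)}$, and verify nonvanishing directly: computing $\langle w,\mathbf{1}\wedge e_k\rangle=2k-1-\numObj$ shows $w$ is not orthogonal to $W_1$ (taking $k=1$ gives $1-\numObj\neq 0$), so $u_{(\numObj-1,1)}\neq 0$; meanwhile, the condition $w\in W_1$, i.e.\ $w=\mathbf{1}\wedge x$ for some $x\in U_1$, forces the coefficient equations $x_b-x_a=1$ for all $a<b$, which are already inconsistent for $\numObj\ge 3$, so $u_{(\numObj-2,1,1)}\neq 0$. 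This establishes part (b). For part (a) the group $\simg$ has only two elements and its irreps $\rho_{(2)},\rho_{(1,1)}$ are one-dimensional, so $\widehat f$ is computed by hand: with $\binom{2}{2}=1$ one gets $\widehat f(\rho_{(2)})=f(e)+f(\tau)=0$ and $\widehat f(\rho_{(1,1)})=f(e)-f(\tau)=2$, the degenerate specialization of the general formula. I expect the main obstacle to be precisely this nonvanishing check, since Schur orthogonality only delivers rank at most one for free; the remaining ingredients are the clean matrix-coefficient identity and the citable decomposition of $\bigwedge^2$ of the standard representation.
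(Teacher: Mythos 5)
Your argument is correct, and it takes a genuinely different route from the paper's. The paper proceeds through the permutation modules: it first shows (Lemma~\ref{LemKendallStepI}) that $\kendall$ lies in the span of the matrix coefficients of $\tau_{(d-2,1,1)}$, then explicitly computes $\widehat\kendall$ at $\tau_{(d)},\tau_{(d-1,1)},\tau_{(d-2,2)},\tau_{(d-2,1,1)}$ as rank-$0$, $1$, $1$, $3$ matrices respectively (Lemma~\ref{LemKendallStepII}), and finally extracts the ranks at the irreps by subtracting multiplicities along the James decompositions \eqref{EqTauI}--\eqref{EqTauIV}. You instead realize $\kendall(\pi)=\binom{d}{2}^{-1}\langle w,\Lambda^2(\pi)w\rangle$ as a single diagonal matrix coefficient of $\bigwedge^2\RR^{d}\cong\rho_{(d-1,1)}\oplus\rho_{(d-2,1,1)}$ and let Schur orthogonality do all the work: the support is confined to the two constituents of $\bigwedge^2\RR^d$, each appearing with multiplicity one, and the transform there is automatically a rank-at-most-one Gram matrix $c\,u_\lambda u_\lambda^\top$, so the only real content is your nonvanishing check of the two components of $w$ (which is carried out correctly; both the inner product $\langle w,\mathbf{1}\wedge e_1\rangle=1-d$ and the inconsistency of $x_b-x_a=1$ for all $a<b$ when $d\ge 3$ are right). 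Your route is shorter and more explanatory --- it makes transparent \emph{why} exactly these two irreps appear and why the rank is one, rather than discovering it by rank bookkeeping. What the paper's heavier computation buys is (i) explicit closed-form factors ($uu^\top$, $v_iv_i^\top$, etc.) in the interpretable tabloid coordinates, which it reuses when interpreting which direction of $\widehat\probP-\widehat\probQ$ the Kendall MMD detects, and (ii) a template that extends directly to the polynomial kernels in Theorem~\ref{ThmPoly}, where the kernel is no longer a single matrix coefficient of a small representation and your exterior-square trick would not apply as stated. One cosmetic caveat: your adapted orthonormal basis realizes blocks only orthogonally \emph{equivalent} to Young's orthogonal forms $\rho_\lambda$, so the precise formula should read $\widehat\kendall(\rho_\lambda)=\binom{d}{2}^{-1}\tfrac{d!}{d_\lambda}(O_\lambda u_\lambda)(O_\lambda u_\lambda)^\top$ for some orthogonal $O_\lambda$; this changes nothing about vanishing or rank, which are basis-independent.
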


Since the Fourier spectrum of a kernel determines its ``richness,'' Theorem~\ref{ThmKendallTransform} offers an alternative perspective 
to Proposition~\ref{PropKendallMap}. The following corollary gives a 
characterization of the discriminative properties of the Kendall kernel in the frequency domain. 
 
\begin{corollary}
When $d \geq 3$, for the Kendall kernel, the MMD semi-metric is given by
\begin{align}
\label{EqnKendallMMD}
\mmd_\tau(P, Q)^2 & = \frac{1}{d!} \sum_{\lambda \in\left\{\substack{(d-1,1),\\
  (d-2,1,1)}\right\}}  d_\lambda \tr\left[\left(\widehat \probP(\rho_\lambda) -
  \widehat \probQ(\rho_\lambda)\right)^\top \widehat{\kendall}(\rho_\lambda) 
\left(\widehat \probP(\rho_\lambda) - \widehat
  \probQ(\rho_\lambda)\right) \right]. 
\end{align}
\end{corollary}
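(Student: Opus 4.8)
The plan is to first derive a Fourier-domain expression for $\mmd_\kernel^2$ valid for \emph{any} right-invariant kernel, and then specialize via Theorem~\ref{ThmKendallTransform} to collapse the sum over partitions down to the two surviving irreps. Writing $h \defn \probP - \probQ$ as a signed measure (a vector in $\RR^{\numObj!}$), I would start from \eqref{EqMMDmeans} and expand the squared norm of the difference of mean embeddings to obtain the quadratic form $\mmd_\kernel^2(\probP,\probQ) = \sum_{\sigma,\sigma'} \kernel(\sigma,\sigma')\, h(\sigma)h(\sigma')$. Invoking right-invariance to pass to the single-argument form $\kernel(\sigma,\sigma') = \kernel(\sigma'\circ\sigma^{-1})$, I introduce the auxiliary function $G(\sigma') \defn \sum_{\sigma}\kernel(\sigma'\circ\sigma^{-1})h(\sigma)$, so that $\mmd_\kernel^2 = \sum_{\sigma'} G(\sigma')\,h(\sigma')$.

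The key algebraic step is a convolution identity. Substituting $\pi = \sigma'\circ\sigma^{-1}$ and using the homomorphism property $\rho_\lambda(\pi\circ\sigma) = \rho_\lambda(\pi)\rho_\lambda(\sigma)$ inside the Fourier transform \eqref{EqFourierTransform}, the double sum factors and yields $\widehat{G}(\rho_\lambda) = \widehat{\kernel}(\rho_\lambda)\,\widehat{h}(\rho_\lambda)$. I would then apply the Fourier inversion formula $G(\sigma') = \frac{1}{\numObj!}\sum_{\lambda\vdash\numObj} d_\lambda \tr[\rho_\lambda(\sigma'^{-1})\widehat{G}(\rho_\lambda)]$, substitute it into $\mmd_\kernel^2 = \sum_{\sigma'}G(\sigma')h(\sigma')$, and interchange the (finite) sums. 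The dependence on $\sigma'$ collects into $\sum_{\sigma'} h(\sigma')\rho_\lambda(\sigma'^{-1})$; taking the irreps $\rho_\lambda$ to be real orthogonal (always possible for $\simg$, e.g.\ in Young's orthogonal form) gives $\rho_\lambda(\sigma'^{-1}) = \rho_\lambda(\sigma')^\top$, so this equals $\widehat{h}(\rho_\lambda)^\top$. This produces the master identity
\begin{align*}
\mmd_\kernel^2(\probP,\probQ) = \frac{1}{\numObj!}\sum_{\lambda\vdash\numObj} d_\lambda\,\tr\!\left[\widehat{h}(\rho_\lambda)^\top\,\widehat{\kernel}(\rho_\lambda)\,\widehat{h}(\rho_\lambda)\right],
\end{align*}
valid for every right-invariant kernel, with $\widehat{h}(\rho_\lambda) = \widehat{\probP}(\rho_\lambda) - \widehat{\probQ}(\rho_\lambda)$ by linearity of the Fourier transform.

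Finally I would specialize to $\kernel = \kendall$ with $\numObj \ge 3$. Theorem~\ref{ThmKendallTransform}(b) asserts that $\widehat{\kendall}(\rho_\lambda) = 0$ for every partition $\lambda$ except $(d-1,1)$ and $(d-2,1,1)$, so all but two summands in the master identity vanish and we recover exactly \eqref{EqnKendallMMD}.

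The routine parts here are bookkeeping: pinning down one Fourier/inversion convention consistent with \eqref{EqFourierTransform}, justifying the real orthogonal choice of irreps so that the conjugate transpose appearing in a general (complex) Plancherel identity reduces to the ordinary transpose of \eqref{EqnKendallMMD}, and checking that interchanging finite sums is legitimate. The one genuinely load-bearing step is the convolution identity $\widehat{G} = \widehat{\kernel}\,\widehat{h}$, which is precisely where the non-Abelian structure of $\simg$ enters and where one must be careful about the order of multiplication; once that is in hand, the corollary follows from inversion together with Theorem~\ref{ThmKendallTransform}.
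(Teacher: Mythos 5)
Your proposal is correct and follows essentially the same route as the paper: both establish the general frequency-domain identity $\mmd_\kernel^2(\probP,\probQ) = \frac{1}{\numObj!}\sum_{\lambda\vdash\numObj} d_\lambda\,\tr[\widehat{h}(\rho_\lambda)^\top\widehat{\kernel}(\rho_\lambda)\widehat{h}(\rho_\lambda)]$ for right-invariant kernels and then invoke Theorem~\ref{ThmKendallTransform} to kill all terms except $\lambda\in\{(d-1,1),(d-2,1,1)\}$. The only cosmetic difference is that you reach the master identity via the signed measure $h=\probP-\probQ$ and the convolution theorem, whereas the paper expands $\mmd_\kernel^2$ into four expectations over independent samples and applies Fourier inversion (with Young's orthogonal representations) to each term; the two computations are interchangeable.
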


This result follows by combining the Fourier-analytic characterization
of Theorem~\ref{ThmKendallTransform} with a more general expression of
$\mmd_\kernel^2$ in the frequency domain, as presented in the supplementary
material.  Corollary~\ref{EqnKendallMMD} shows
that most differences between $\widehat\probP$ and $\widehat\probQ$ do
not contribute to $\mmd_\kernel(\probP, \probQ)$. The only
differences that contribute to $\mmd_\kernel$ are the $(\numObj -
1) \times (\numObj - 1)$ matrix $\widehat \probP (\rho_{(\numObj -1 ,1)}) -
\widehat \probQ (\rho_{(\numObj -1 ,1)})$ and the ${\numObj - 1
  \choose 2} \times {\numObj - 1 \choose 2}$ matrix $\widehat \probP
(\rho_{(\numObj - 2 , 1, 1)}) - \widehat \probQ (\rho_{(\numObj -2 ,1,
  1)})$.  To be more precise, the Kendall kernel can differentiate
between $\probP$ and $\probQ$ if and only if their Fourier transforms
at $\rho_{(d-1,1)}$ or $\rho_{(d-2,1,1)}$ differ along a single
direction aligning with the only eigenvector with a non-zero
eigenvalue of $\widehat\kendall(\rho_{(d-1,1)})$ or
$\widehat\kendall(\rho_{(d-2,1,1)})$.

We now turn to Fourier analysis of the Mallows
kernel~\eqref{EqnDefnMallows}.  Despite its superficial similarity
to the Kendall kernel, it has very different properties. 
\begin{theorem}
\label{ThmMallows}
The Fourier transform $\widehat\mallows$ of the Mallows kernel
is strictly positive definite at all irreducible representations
$\rho_\lambda$.
\end{theorem}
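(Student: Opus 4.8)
The plan is to reduce the statement to a single linear-algebra fact about the finite-dimensional feature map of Proposition~\ref{PropMallowsMap}, via the non-Abelian Bochner/Plancherel correspondence. Writing the Mallows kernel in right-invariant form as $\mallows(\sigma,\sigma') = \phi(\sigma'\circ\sigma^{-1})$ with $\phi(\pi) = \exp(-\nu\,\ii(\pi))$, I first observe that the full $d!\times d!$ Gram matrix $K$ with entries $K_{\sigma,\tau} = \phi(\tau\circ\sigma^{-1})$ is exactly the matrix of left multiplication by the group-algebra element $\phi$ on $\CC[\simg]$. By the Plancherel theorem and the decomposition of the regular representation, this operator is unitarily block-diagonalized, the block $\widehat{\mallows}(\rho_\lambda)$ occurring with multiplicity $d_\lambda \geq 1$ for each $\lambda \vdash d$. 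Since $\ii(\pi)=\ii(\pi^{-1})$, the matrix $K$ is real symmetric (and PSD by the stated Bochner theorem), and the similarity is unitary, so strict positive definiteness of $\widehat{\mallows}(\rho_\lambda)$ at \emph{every} irrep is equivalent to strict positive definiteness of the single matrix $K$. Thus it suffices to prove $K \succ 0$, i.e.\ that the $d!$ feature vectors $\{\Phi_m(\sigma)\}_{\sigma\in\simg}$ are linearly independent.

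For the second half I would exploit the explicit form of $\Phi_m$ from Proposition~\ref{PropMallowsMap}. Assuming the bandwidth $\nu>0$, so that $c \defn \tfrac{1-\exp(-\nu)}{1+\exp(-\nu)}\in(0,1)$, the coordinate of $\Phi_m(\sigma)$ at a subset $S=\{s_1,\dots,s_r\}\subseteq\twosets^*$ equals a global positive constant times $c^{r/2}$ times the Boolean monomial $\prod_{i=1}^r x_{s_i}(\sigma)$, where $x_s(\sigma) \defn 2\indi_{\{\sigma(a)<\sigma(b)\}}-1 \in \{-1,+1\}$ records the relative order of the pair $s=(a,b)$. Hence the $d! \times 2^{\binom{d}{2}}$ matrix $M$ whose rows are the $\Phi_m(\sigma)$ factors as $M = \widetilde H D$, where $D$ is the positive (hence invertible) diagonal matrix carrying the coordinatewise scalings $Cc^{|S|/2}$, and $\widetilde H$ is the submatrix of the $(\mathbb{Z}/2)^{\binom{d}{2}}$ character (Hadamard) matrix $H^{\otimes \binom{d}{2}}$ obtained by keeping the rows indexed by the sign patterns $x(\sigma) = (x_s(\sigma))_s$.

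It then remains to argue that $\widetilde H$ has full row rank $d!$. Two ingredients make this immediate. First, the sign pattern $x(\sigma)$ records all pairwise comparisons and therefore determines $\sigma$, so $\sigma \mapsto x(\sigma)$ is injective and $\widetilde H$ has $d!$ \emph{distinct} rows. Second, $H^{\otimes \binom{d}{2}}$ is invertible (orthogonal up to scaling), and distinct rows of an invertible matrix are linearly independent. Therefore $\operatorname{rank}(M) = \operatorname{rank}(\widetilde H D) = \operatorname{rank}(\widetilde H) = d!$, the feature vectors are linearly independent, and $K = MM^\top \succ 0$; by the first paragraph this yields $\widehat{\mallows}(\rho_\lambda) \succ 0$ for every $\lambda \vdash d$. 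As a byproduct, strict positive definiteness at every (in particular every nontrivial) irrep shows, through the frequency-domain expression for $\mmd_{\mallows}^2$, that the Mallows mean embedding is injective, recovering the universal and characteristic claims.

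The hard part will be the first paragraph rather than the combinatorics: one must justify carefully, through the decomposition of the regular representation, that the Gram matrix of a right-invariant kernel is unitarily similar to $\bigoplus_{\lambda\vdash d} \widehat{\mallows}(\rho_\lambda)^{\oplus d_\lambda}$, tracking the multiplicities $d_\lambda$ and the convention ($\rho_\lambda$ versus $\rho_\lambda(\cdot^{-1})$) so that positive definiteness transfers in both directions. This representation-theoretic bookkeeping is naturally deferred to the supplementary material, while the linear-independence argument via the Boolean cube is elementary and can be presented directly.
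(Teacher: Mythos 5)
Your proof is correct, but it reaches the key step by a genuinely different route from the paper's. Both arguments make the same initial reduction: strict positive definiteness of $\widehat\mallows(\rho_\lambda)$ at every irrep is equivalent to strict positive definiteness of the full $d!\times d!$ Gram matrix (the paper invokes Bochner's theorem and the Fourier inversion formula; you make the underlying block-diagonalization of the convolution operator by the regular representation explicit, which is a cleaner way to see that the equivalence runs in both directions). Where you diverge is in proving that the Gram matrix has full rank. The paper Taylor-expands $e^{\frac{\nu}{2}\binom{d}{2}}M_m$ as a series of Hadamard powers of the Kendall Gram matrix, peels off the finitely many terms forming the Gram matrix of the $\nu$-normalized polynomial kernel of degree $d-1$ (strictly positive definite by Theorem~\ref{ThmPoly}, whose proof in turn rests on the inductive linear-independence argument of Lemma~\ref{LemLinInd}), and disposes of the remainder by Schur's product theorem. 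You instead prove linear independence of the Mallows feature vectors of Proposition~\ref{PropMallowsMap} directly: since that feature map carries \emph{all} $2^{\binom{d}{2}}$ Boolean monomials in the pairwise-comparison signs, the feature matrix is an invertible diagonal rescaling of $d!$ distinct rows of the $2^{\binom{d}{2}}\times 2^{\binom{d}{2}}$ character (Hadamard) matrix of $(\ZZ/2)^{\binom{d}{2}}$, and distinct rows of an orthogonal matrix are linearly independent. This is shorter and entirely elementary, avoiding both Schur's theorem and the delicate hypercube induction behind Lemma~\ref{LemLinInd}. The trade-off is scope: your Hadamard argument needs the full power set of monomials, so it cannot recover the sharper statement of Theorem~\ref{ThmPoly} that degree $d-1$ already suffices --- that is precisely what the paper's harder lemma buys. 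Two small points to keep in mind: the argument requires $\nu>0$ so that the diagonal scaling $c^{|S|/2}$ is nonvanishing (implicit in the paper as well), and the representation-theoretic bookkeeping in your first paragraph, while standard, does deserve the careful treatment you propose to defer, since the paper itself only gestures at it via Bochner and the inversion formula.
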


Note that Theorem~\ref{ThmMallows} corrects an assertion in the paper~\cite{jiao2015kendall}; 
the authors of that work suggested that since the
Mallows kernel depends only on the relative rankings of pairs of
objects, the Fourier transform $\widehat \mallows$ should be expected
to be zero at all irreps $\lambda \lhd (\numObj - 2 , 1, 1)$.
Theorem~\ref{ThmMallows} shows that this natural intuition does not
actually hold.

Theorem~\ref{ThmMallows} also has implications for the universality of
the Mallows kernel.   In \cite{gretton2012kernel}
the authors 
show that a universal and continuous kernel on a compact metric space
is characteristic---hence, a kernel on $\simg$ is universal if and only if it
is characteristic.  As with Theorem~\ref{ThmKendallTransform},
Theorem~\ref{ThmMallows} has implications for the kernel MMD induced
by the Mallows kernel. In particular, it shows that the Mallows kernel
is both characteristic and universal, and hence $\mmd_{\mallows}$ is a metric on
probability distributions over $\simg$.


\section{A family of polynomial-type kernels}
\label{SecPolyKernels}

Based on our results thus far, it is natural to suspect that there
exists a family of kernels interpolating between the relative
simplicity of the Kendall kernel, which is analogous to a linear
kernel on $\real^d$, and the richness of the Mallows kernel, which is 
analogous to a Gaussian kernel on $\real^d$.  This intuition motivates
us to introduce three families of polynomial-type
kernels on the symmetric group, defined as follows:
\begin{subequations}
\begin{align}
\polyKer{\power}(\sigma, \sigma') & \defn \left(1 +
\kendall(\sigma, \sigma')\right)^\power \\
\NorPolyKer{\power}(\sigma, \sigma') & \defn \left(1 +
\frac{\kendall(\sigma, \sigma')}{\power}\right)^\power, \quad
\mbox{and} \\
\LamPolyKer{\power}{\band}(\sigma, \sigma') & \defn
\exp\left({-\frac{\band}{2}{\numObj \choose 2}}\right) \left(1 + \band{\numObj
  \choose 2}\frac{\kendall(\sigma,
  \sigma')}{2\power}\right)^\power.
\end{align}
\end{subequations}
We refer to these three kernels as the \emph{polynomial
  kernel}, the \emph{normalized polynomial kernel}, and the
\emph{$\band$-normalized polynomial kernel} of degree $k$, respectively.  Since each
kernel depends only on the number of discordant pairs, they are all
right-invariant.  Moreover, each
kernel is positive semidefinite, since they can each be written as a
polynomial function of the Kendall kernel with
non-negative coefficients.

\begin{theorem}
\label{ThmPoly}
The Fourier transforms of the three polynomial kernels
$\polyKer{\power}$, $\NorPolyKer{\power}$,
$\LamPolyKer{\power}{\band}$ are zero at all irreducible
representations $\rho_\lambda$ with $\lambda \lhd (\max\{\numObj -
2\power, 1\}, 1, \ldots, 1)$. Furthermore, when $p~\geq~\numObj - 1$,
the Fourier transform of the three polynomial kernels is strictly positive
definite at all irreducible representations.
\end{theorem}
 
The first part of the theorem shows that the polynomial kernels of
degree $\power$ do not detect differences between distributions at
irreps $\rho_\lambda$ with $\lambda$ not higher in the partial
ordering than the \mbox{partition $(\max\{\numObj - 2\power, 1\}, 1,
  \ldots, 1)$.}  Intuitively, as the degree of the polynomial kernels
increases they are able to detect more differences between probability
distributions.  The second part of the theorem shows that the
polynomial kernels of degree at least $\numObj - 1$ detect all
differences between probability distributions.

The appeal of defining the second and third kernels, $\NorPolyKer{p}$
and $\LamPolyKer{p}{\band}$, in addition to the first one, is
two-fold. On the one hand, in practice, the kernel $\polyKer{p}$
becomes difficult to evaluate when $p$ is large because
$\polyKer{p}(\sigma, \sigma) = 2^p$. On the other hand, the two
normalized kernels satisfy the relations
\begin{align}
\label{EqPolyLim}
 \lim_{p \rightarrow \infty} \NorPolyKer{p}(\sigma, \sigma') &=
 \exp({\kendall(\sigma, \sigma')}) ~\mbox{~ and ~}  \lim_{p
   \rightarrow \infty} \LamPolyKer{p}{\band}(\sigma, \sigma') =
 \exp({-\band n_d(\sigma, \sigma')}) = \mallows (\sigma,
 \sigma').
\end{align}
The first limit is a constant times the Mallows kernel with the
parameter $\band = 2{\numObj \choose 2}^{-1}$, while the second limit
is the Mallows kernel $\mallows$. This observation suggests we can
infer properties about the Mallow's kernel by working with the
$\band$-normalized polynomial kernel. Indeed, our proof of
Theorem~\ref{ThmMallows} makes use of this fact.


\subsection{Feature maps of the polynomial kernels}

We now consider the feature spaces associated with the polynomial kernels.  We show
here how the dimensions of the feature spaces increase as the degree
of the kernels increases, eventually leading to the feature space of
the Mallows kernel (up to constants).  We give a recursive
construction of the feature maps \mbox{$\EmbdPoly{\power} \colon\simg
  \rightarrow \RR^{\left(1 + {\numObj \choose 2}\right)^\power}$} that
satisfy the relation \mbox{$\polyKer{\power}(\sigma, \sigma') =
  \EmbdPoly{\power}(\sigma)^\top \EmbdPoly{\power}(\sigma')$.}  First,
we use the feature map of the Kendall kernel to construct
$\EmbdPoly{1}$; in particular, the map \mbox{$\EmbdPoly{1} \colon
  \simg \rightarrow \RR^{1 + {\numObj \choose 2}}$} is defined by
\begin{align*}
\EmbdPoly{1}(\sigma)_{t_0} \defn 1 \quad \text{ and } \quad
\EmbdPoly{1}(\sigma)_{t_r} \defn \sqrt{{\numObj \choose
    2}^{-1}}\left(2\indi_{\{\sigma(i_r) < \sigma(j_r)\}} -1 \right),
\end{align*}
where the coordinates are indexed by the unordered pair $t_{0} = \{-1,
0\}$ and the ${\numObj \choose 2}$ unordered pairs $t_r = \{i_r,
j_r\}$ with $i_r, j_r \in [d]$ and $i_r < j_r$. We denote the set of
these unordered pairs by
\begin{align}
\label{EqTunordered}
\twosets \coloneqq \left\{t_0, t_1, \ldots, t_{{\numObj \choose
    2}}\right\}.
\end{align}
The feature map $\EmbdPoly{1}$ clearly satisfies $\polyKer{1} (\sigma,
\sigma') = \Phi_1(\sigma)^\top \Phi_1(\sigma')$. Now we use the map
$\EmbdPoly{\power -1}$ to construct a feature map $\EmbdPoly{\power}$
for $\power \geq 1$. By definition, we have
\begin{align*}
\polyKer{\power}(\sigma, \sigma') & = \left(1 + \kendall(\sigma,
\sigma')\right)\left(1 + \kendall(\sigma, \sigma')\right)^{\power - 1}
= \EmbdPoly{1}(\sigma)^\top \EmbdPoly{1}(\sigma')
\EmbdPoly{p-1}(\sigma')^\top \EmbdPoly{p-1}(\sigma) \\ 
& = \tr\left(\EmbdPoly{1}(\sigma)^\top \EmbdPoly{1} (\sigma')
\EmbdPoly{\power - 1}(\sigma')^\top \EmbdPoly{\power -
  1}(\sigma)\right)\\
& = \tr \left(\left(\EmbdPoly{1}(\sigma) \EmbdPoly{\power -
  1}(\sigma)^\top\right)^\top \EmbdPoly{1}(\sigma')\EmbdPoly{\power -
  1}(\sigma')^\top\right).
\end{align*}
Therefore, the polynomial kernel of degree $\power$ between $\sigma$
and $\sigma'$ is equal to the inner product of the matrices
$\Phi_{1}(\sigma)\Phi_{\power -1}(\sigma)^\top$ and
$\Phi_{1}(\sigma')\Phi_{\power -1}(\sigma')^\top$. By induction, we
see that $\EmbdPoly{\power}$ can be obtained from $\EmbdPoly{1}$ by
taking the outer product with itself $\power$ times, meaning that the
embedding $\Phi_\power: \simg \rightarrow \RR^{\left(1 + {\numObj
    \choose 2}\right)^\power}$ can be expressed in terms of a sequence
$s_1, s_2, \ldots, s_\power$ of elements of $\twosets$ as
\label{EqPolyMap}
\begin{align}
\label{eq:features_poly}
\EmbdPoly{\power}(\sigma)_{s_1 s_2 \ldots s_\power} = \prod_{i =
  1}^\power \EmbdPoly{1}(\sigma)_{s_i}.
\end{align}

It is clear that as the degree of the polynomial kernels increases,
the kernels capture more information about the probability
distribution of the data. 
Proposition~\ref{PropMallowsMap} together with
Eq.~\eqref{eq:features_poly} show that when the degree of the
polynomial kernels is at least ${\numObj \choose 2}$, their
feature sets contain all the features of the Mallows kernel (up to
constants). This offers another perspective on how the
polynomial kernels interpolate between the Kendall kernel and the
Mallows kernel.


\section{Empirical Results} 
\label{sec:experiments}

We now present an empirical exploration of our kernel-based methodology.  We
present results for simulated data and for two real-world datasets---the European
Union survey Eurobarometer data and the large-scale MovieLens dataset.

\subsection{Experiments with simulated data}
\label{sec:simulated_data}

\paragraph{Setup.} We evaluate the empirical power
of two-sample hypothesis tests based on the Kendall and Mallows kernel
$U$-statistics.  In order to do so, we chose pairs of probability
distributions $\probP$ and $\probQ$ over $\simg$ and then sampled
i.i.d. rankings $\alpha_1$, $\alpha_2$, \ldots, $a_{n}$ from $\probP$
and $\beta_1$, $\beta_2$, \ldots, $\beta_{n}$ from
$\probQ$.  The size of the rankings was fixed to $d = 5$.  The
hypothesis tests considered here reject the null when
$T_\kernel(\alpha, \beta) > t^*$ where the threshold $t^*$ is chosen
by permutation testing to ensure the probability of a false positive
is at most $0.05$.

We fixed $\probP$ to be the uniform distribution over $\simg$ and we chose $Q$ such that $\|A_\tau (\probP - \probQ)\| = \delta$ for different values of $\delta$, where $A_\tau$ is the matrix discussed in Proposition~\ref{PropKendallMap}. For each value of $\delta$ there are many distributions $\probQ$ 
that are at the prescribed distance from $\probP$.
When $\delta > 0$, we first sampled uniformly a direction from the complement of the null space of $A_\tau$ and then chose the distribution $\probQ$ at the prescribed distance away from $\probP$ in that direction. When $\delta = 0$, we sampled uniformly a direction from the null space of $A_\tau$ and then chose the distribution $Q$ which is the farthest away from $\probP$ in that direction. 

Once $\probP$ and $\probQ$ were fixed,  we sampled i.i.d. sets of rankings $\{\alpha_i\}_{i = 1}^n$ and $\{\beta_i\}_{i = 1}^n$ from the distributions $\probP$ and $\probQ$ respectively. We varied the sample size $n$ from $10$ to $300$ in increments of $10$. For each pair of sample sets $\{\alpha_i\}_{i = 1}^n$ and $\{\beta_i\}_{i = 1}^n$ we used $200$ permutations of these $2n$ data points to estimate the rejection threshold $t^*$. To get estimates of the power of the kernel tests, for each value of $n$, we sampled $1000$ data sets from the fixed distributions $\probP$ and $\probQ$ and ran the tests on them, measuring the frequency with which the tests rejected the null hypothesis.

\begin{figure}[h!]
\centering 
\begin{subfigure}[b]{0.49\textwidth}
\centerline{\includegraphics[width=0.85\columnwidth]{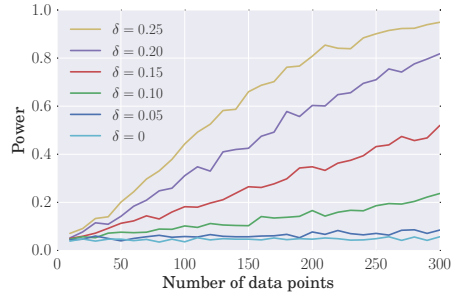}}
\caption{Power of Kendall kernel MMD.}
\label{FigKendall}
\end{subfigure}\\
\begin{subfigure}[b]{0.49\textwidth}
\centerline{\includegraphics[width=0.85\columnwidth]{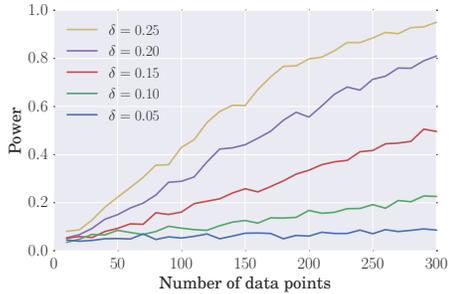}}
\caption{Power of Mallows' kernel MMD ($\band = 0.22$).}
\label{FigMallowsNonNull}
\end{subfigure}
\begin{subfigure}[b]{0.49\textwidth}
\centerline{\includegraphics[width=0.85\columnwidth]{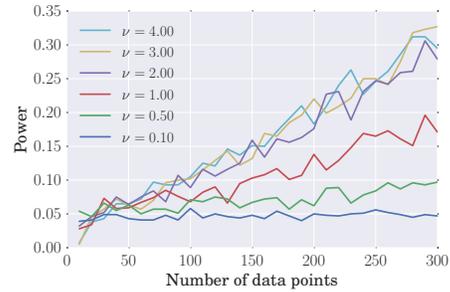}}
\caption{Power of Mallows' kernel MMD ($\delta = 0$).}
\label{FigMallowsNull}
\end{subfigure}
\caption{The empirical power of the MMD two-sample test with the Kendall kernel (a) and Mallows kernel (b,c) as a function of the number of data points $n$. For each $n$ we generated $1000$ data sets, and for each test we used $200$ permutations to choose the rejection threshold.}
\end{figure}

\paragraph{Discussion of results.} 

Recall the definition of the mean embedding $\mu_{\kernel, P} = \EE_{\sigma \sim \probP}\Phi(\sigma)$ of the probability distribution $P$ with respect to the kernel $\kernel$. Similarly, define the covariance matrix of $P$ as 
$
\Sigma_{\kernel,P} = \EE_{\sigma \sim \probP }\Phi(\sigma) \Phi(\sigma)^\top. 
$
Elementary computations~\citep{chen2010two}) show that
\begin{align*}
\EE T_\kernel(\alpha, \beta) &= \|\mu_{\kernel, \probP} - \mu_{\kernel, \probQ}\|^2 = 4{\numObj \choose 2}^{-1}\|A_\tau(\probP - \probQ)\|^2 \\
\var T_\kernel(\alpha, \beta) &=  \frac{2}{n(n - 1)}\tr\left(\Sigma_{\kernel, \probP}^2\right) + \frac{2}{n(n - 1)}\tr\left(\Sigma_{\kernel, \probQ}^2\right) + \frac{4}{n^2}\tr\left(\Sigma_{\kernel, \probP}\Sigma_{\kernel, \probQ}\right)\\
&\quad + \frac{4}{n}(\mu_{\kernel, \probP} - \mu_{\kernel, \probQ})^\top \Sigma_{\kernel, \probP}(\mu_{\kernel, \probP} - \mu_{\kernel, \probQ}) +  \frac{4}{n}(\mu_{\kernel, \probP} - \mu_{\kernel, \probQ})^\top \Sigma_{\kernel, \probQ}(\mu_{\kernel, \probP} - \mu_{\kernel, \probQ}).
\end{align*}

~\cite{ramdas2015adaptivity} showed that for real-valued data, when
$d$ and $n$ are sufficiently large, the power of kernel $U$-statistic
tests scales roughly like $\Psi (n \delta^2 / V)$ for sufficiently
small $\delta$, where $\Psi$ is the Gaussian CDF, and $V$ is a term
independent of $n$ which depends on the variance. The kernels over the
symmetric group do not satisfy the necessary assumptions to apply the
results of that work, but we observe a similar behavior in our
simulations.  For instance, Figure~\ref{FigKendall} shows the
empirical power of the Kendall kernel test as function of $n$ for
different values of $\delta$. As expected, the power of the test
increases as $\delta$ increases. More interestingly, observe that for
certain values of $n$ a doubling of $\delta$ translates into roughly
four times more power. Finally, note that when $\delta = 0$ the
Kendall kernel test has trivial power $0.05$. This behavior meets our
expectations based on Proposition~\ref{PropKendallMap} and the results
of~\cite{ramdas2015adaptivity}.

From Proposition~\ref{PropMallowsMap} we know that as the bandwidth
$\nu$ decreases, the weight of the features $\indi_{\{\sigma \colon
  \sigma(a) < \sigma(b)\}}$ increases relative to higher-order
features. Therefore, when $\delta > 0$ we expect that the Mallows
kernel with a small bandwidth will match the performance of the
Kendall kernel. Figure~\ref{FigMallowsNull} corroborates this
intuition---it shows the power of the Mallows kernel with bandwidth
$\band = 0.22$. When $\delta = 0$ the low-order features do not
capture the difference between $\probP$ and $\probQ$, and therefore a
higher bandwidth should yield higher power.  In
Figure~\ref{FigMallowsNull} we see that the Mallows kernel has power
against the null hypothesis $\probP = \probQ$ even when $\delta = 0$.
These results agree with the fact that the Mallows kernel is
characteristic (Theorem~\ref{ThmMallows}). As expected, when $\delta =
0$ a higher bandwidth yields more power, but at the cost of a higher
variance of the statistic.

\subsection{Survey Data}
\label{sec:survey_data}

\paragraph{Dataset and Methods.} In this section we showcase the use of kernels for hypothesis testing, classification, and regression on a real rankings dataset:
the European Union survey Eurobarometer $55.2$~\cite{eu200155}. As part of this survey, collected in $2001$ in countries members of the European Union, participants 
expressed their views on topics ranging from the single currency, agriculture, to science and technology. 
Participants were selected through a multi-stage stratified random sampling method, and there were $16130$ respondents in total. 
As part of the survey the participants were asked to rank in the order of preference six sources of news regarding scientific developments: TV, radio, newspapers and magazines, scientific magazines, the internet, school/university. The dataset also includes demographic information such as gender and age; a snippet of the dataset is shown in Table~\ref{tab:survey_data}. 

We removed all respondents who did not provide a complete ranking over the six sources of news, leaving $12216$ participants. 
Then, we split the dataset in two distinct ways: across gender, and across age groups ($40$ or younger and over $40$). Out of the $12216$ participants, $5915$ were men, $6301$ were women, $5985$ were $40$ 
or younger, $6231$ were over $40$. 

We ran two-sample hypothesis tests across these groups with
both the Kendall and the Mallows kernels. Furthermore, we fitted a kernel SVM with the Mallows kernel to predict the age group of participants. Finally, we fitted a kernel 
ridge  regression model with the Mallows kernel to predict the age of participants. For both the classification and regression tasks we used the Scikit-Learn Python package of~\cite{scikit-learn}
to fit the models. Then bandwidth of the Mallows kernel and the regularization parameter were chosen by cross-validation. 

\paragraph{Results and Discussion.}

For the hypothesis tests across gender we sub-sampled $300$ participants from each of the two groups and ran a permutation test with $400$ permutations, using the Kendall and the Mallows ($\band = 1$) kernel U-statistics. We obtained $p$-values equal to $0.075$ and $0.412$ respectively. After increasing the number of samples from each group to $600$, we obtained $p$-values equal to $0.002$ and $0.002$ respectively.  

For the hypothesis tests across age groups we sub-sampled $30$ participants from each of the two group and ran a permutation test with $400$ permutations, using the Kendall and the Mallows ($\band = 1$) kernel U-statistics. We obtained $p$-values equal to $0.007$ and $0.477$ respectively. After increasing the number of samples from each group to $50$, we obtained $p$-values equal to $0.002$ and $0.005$ respectively. 
We note that fewer samples than for the tests across gender were required to reject null hypothesis. For the type of rankings considered here we did expect a large discrepancy across age groups. 
In general young participants are more likely to attend schools or universities, making them more likely to rank highly these institutions as preferred source of information. Moreover, in $2001$,
it was to be expected that younger participants were more accustomed to the internet than older participants.

For the classification task across age groups, we fit a kernel SVM model using the Mallows kernel. We split the $12216$ participants randomly into a training set of $10000$ participants, and a 
test set of $2216$ participants. The bandwidth for the Mallows kernel was chosen to be $0.1$ through cross-validation. We obtained an error rate of $34\%$, which is better than chance. 

For the regression task to predict age, we fit a kernel ridge regression model using the Mallows kernel. We split the $12216$ participants randomly into a training set of $10000$ participants, and a 
test set of $2216$ participants. The bandwidth for the Mallows kernel was chosen to be $0.1$ through cross-validation.
 The model predicted the age of the respondents in the test set with an average $\ell_1$-error of $11$ years.

\subsection{Movie Ratings}
\label{sec:movie_data}

\paragraph{Dataset and Methods.}

Not all rankings come in the form of explicit orderings of
alternatives.  The MovieLens 1M dataset contains about one million
ratings of movies provided by $6000$ users of the website
\url{movielens.org}. For each user in the dataset we are given their
gender, age, and occupation, and for each movie we are given their
classifications into genres.  Each movie can belong to multiple genres
such as action, drama, thriller and comedy.  The movies contained in
this dataset are split into a total of $18$ genres. The ratings are
measured on a $5$-star scale.

For each movie genre we counted the number of movies belonging to that
genre, and then kept only the ratings to movies belonging to at least
one of the ten most popular genres. Then, for each user we computed
the average of the ratings across the ten movie genres. Finally, we
removed all users that did not record at least one rating for each of
the ten movie genres. The total number of users remaining in the
dataset after these procedures was $4428$.

We split this data in two distinct ways: across gender, and across age
groups (younger than $35$ and $35$ or older).  Given this data we ran
two-sample hypothesis tests across the groups by using the standard
linear kernel $U$-statistic for data in $\RR^{10}$. Furthermore, for
each user we transformed the average ratings into rankings in the
obvious way (the highest average rating takes rank one and so on),
breaking ties randomly.  Given the data in this new format, we ran
two-sample hypothesis tests across the groups by using the Mallows ($\band
= 0.2$) and the Kendall kernel U-statistics. For all three
$U$-statistics we sub-sampled $n$ samples from each group of users and
used $200$ permutations to determine the rejection threshold. For each
sample size $n$, we ran $100$ trials to estimate the empirical power
of the hypothesis tests.


\paragraph{Results and discussion.}

Our findings are summarized in Figure~\ref{FigMovies}.  All three
tests reject their respective null hypotheses with power going to one
as the number of data points used increases.  It is interesting to
note that depending on the split of the data, either the Kendall and
Mallows tests have more power than the linear kernel, or the other way
around. Of course, the linear kernel tests the equality of the
probabilities of the average ratings in $\RR^{10}$, whereas the
Kendall and the Mallows kernel are testing for differences in their
respective feature spaces.  This observation gives a strong
motivation for transforming ratings into rankings.  As compared to
studying the distributions of the users' ratings, it is arguably more
natural to study the users' distributions of preferences between movie
genres.

\begin{figure}
\centering
\begin{subfigure}[b]{0.49\textwidth}
\centerline{\includegraphics[width=0.85\columnwidth]{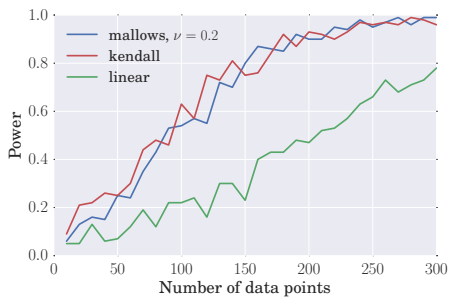}}
\caption{Hypothesis tests across gender.}
\label{FigMoviesGender}
\end{subfigure}
\begin{subfigure}[b]{0.49\textwidth}
\centerline{\includegraphics[width=0.85\columnwidth]{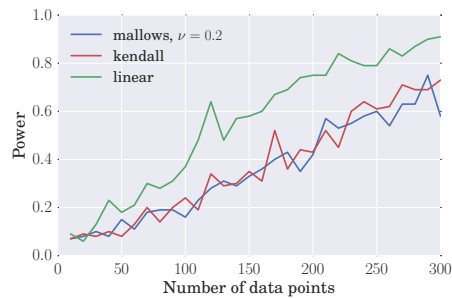}}
\caption{Hypothesis tests across age groups.}
\label{FigMoviesAge}
\end{subfigure}
\caption{The empirical power of the Mallows, Kendall, and linear kernel tests as a function of the number of data points $n$. For each $n$ we sub-sampled the dataset $100$ times to estimate the fraction of times the tests reject at level $0.05$. For each test we used $200$ permutations to select the rejection threshold.}
\label{FigMovies}
\end{figure}

\section{Conclusions}
\label{SecConc}

In this paper, we provided feature map and Fourier-analytic characterizations 
for the following right-invariant kernels: Kendall, Mallows, and a novel
family of polynomial kernels. We showed that the Kendall kernel is
nearly degenerate in two ways: its Gram matrix has rank ${\numObj \choose 2}$, and 
it has only two nonzero Fourier matrices, both of which have rank one. 
We constructed a $2^{\numObj \choose 2}$ feature map for the Mallows kernel and 
showed that its Gram matrix has full rank $\numObj!$.
This shows that the Mallows kernel lies at the other extreme than the Kendall kernel, 
being both universal and characteristic. In Fourier space, this translates to the 
Mallows kernel having a strictly positive definite Fourier transform at
all the irreps. Moreover, having the feature map for the Mallows kernel in closed form
informed our choice of bandwidth for the Mallows kernel in the two-sample
testing experiments. 

These results reveal that the Kendall and Mallows
kernels are quite different, even though both of them depend only
on counting discordant pairs between rankings.  There is a natural
analogy between these kernels in the space of permutations to the
linear and Gaussian kernels in Euclidean space. Building on this 
analogy, we proposed a new class of polynomial kernels that smoothly
interpolate between the Kendall and Mallows extremes, yielding a
hierachy of kernels that are sensitive to differences between
distributions at an increasingly dense set of frequencies.

Many properties of the Fourier transform of the Mallows and polynomial
kernels are still not understood. For example, unlike the case of the
Kendall kernel, we do \emph{not} have closed-form descriptions of the
Fourier matrices for these kernels.  Such concise expressions would
not only be of mathematical interest, but could also useful for
computing in the spectral domain.  It would also be interesting to
understand the properties of these kernels when applied to partial
rankings (top-$k$ or random-$k$), which is even harder because partial
rankings do not jointly form a group. We view the current results on
kernels for full rankings as an important step towards developing and
rigorously analyzing flexible kernel methods for partial rankings.

In Section~\ref{sec:experiments} we studied the empirical power of
kernel $U$-statistic two-sample tests with the Kendall and Mallows
kernels under different sets of alternatives. The scaling of the power
with the distance between the alternatives is similar to that of the
linear and Gaussian kernel over real data.  It would be interesting to
characterize the power of two-sample tests using the
Kendall or Mallows kernel as a function of the number of samples,
$\numObj$ and an appropriate signal-to-noise ratio.  Our final set of
experiments involved data transformed from raw numerical scores
(ratings of movies) into rankings, a transformation also explored
by~\cite{jiao2015kendall}.  This type of reduction to rank statistics,
while well studied in the context of classical rank-based methods for
testing~\citep{lehmann2006nonparametrics}, merits further study in the
context permutation-based covariates.  It offers invariance to
arbitrary monotone transformations of the covariates, and hence a way
of protecting against model mismatch and/or covariate biases.



\appendix 

\section*{Outline}

This supplementary material includes all the proofs of the results presented in
the main text. Sections~\ref{SecProofKendallMap} and~\ref{SecProofMallowsMap} contain the proofs of the results presented in 
Section~\ref{SecFeatures} of the main text. In Sections~\ref{SecProofKendall},~\ref{SecProofPoly}, and~\ref{SecProofMallows} we prove our results concerning the Fourier spectra 
of the Kendall, polynomial, and Mallows kernels. Section~\ref{SecAppRep} contains further background material on representation theory and Fourier analysis on the symmetric group needed in the proofs. Finally, Section~\ref{SecAppProofs} contains some proofs
of miscellaneous claims that are used throughout.  

\section{Proofs of main results}
\label{SecProofs}

\subsection{Proof of Proposition~\ref{PropKendallMap}}
\label{SecProofKendallMap}

We prove that all the basis vectors of $\RR^{{\numObj \choose 2}}$ are in the span of the matrix $A_\tau$.
To achieve this we order the tuples $t_i = (a_i, b_i)$, with $a_i < b_i$, as follows. The tuples $t_i$ and $t_j$ are ordered $t_i < t_j$ if and only if $a_i < a_j$ or $a_i = a_j$ and $b_i < b_j$. With total ordering fixed the $i$-th coordinate of $\RR^{{\numObj \choose 2}}$ corresponds to the tuple $t_i$. 

Then it is enough to prove that for any $1 \leq j \leq {\numObj \choose 2}$ the vector $v_j = \sum_{i = 1}^j e_{i}$ is equal to the column $(A_\tau)_\sigma$ for some appropriate $\sigma \in \simg$, where $e_{i}$ is the $i$-th standard basis vectors of $\RR^{{\numObj \choose 2}}$. 

We will construct inductively the permutations $\pi_j$ such that $(A_\tau)_{\pi_j} = v_j$. Observe that the identity permutation $\pi_{{d\choose 2}}(i) = i$ satisfies $(A_{\tau})_{\pi_{{d\choose 2}}} = v_{{d\choose 2}}$. Also, note that if we swap the ranks of ${\numObj}$ and $\numObj - 1$ in the permutation $\pi_{{\numObj \choose 2}}$, we obtain a permutation $\pi_{{d\choose 2} - 1}$ such that $(A_\tau)_{\pi_{{d\choose 2} - 1}} = v_{{d\choose 2} - 1}$.

Assume we have constructed $\pi_{j+1}$ such that $(A_\tau)_{\pi_{j + 1}} = v_{j+1}$. We construct $\pi_j$ such that $(A_\tau)_{\pi_{j}} = v_j$. Let $t_{j + 1} = (a, b)$ be the tuple corresponding to the $j+1$-st coordinate. Since $(A_\tau)_{\pi_{j+1}} = v_{j+1}$, we have $\pi_{j+1}(a) < \pi(r)$ for all $a < r \leq b$, and $\pi_{j+1}(a) > \pi_{j + 1}(r)$ for all $r > b$. Moreover $\pi_{j+1}(r) > \pi_{j + 1}(r + 1)$ for all $a < r < b$. Therefore, if we choose $\pi_{j}(r) = \pi_{j + 1}(r)$ for all $r$ distinct from $a$ and $b$ and $\pi_j (a) = \pi_{j + 1}(b)$, $\pi_j(b) = \pi_{j + 1}(a)$, we find that $(A_\tau)_{\pi_j} = v_j$. The conclusion follows.

\subsection{Proof of Proposition~\ref{PropMallowsMap}}
\label{SecProofMallowsMap}

We begin with the observation that the $\band$-normalized polynomial kernel converges to the Mallows kernel as its degree increases:
\begin{align*}
\LamPolyKer{\power}{\band}(\sigma, \sigma') = e^{-\frac{\band}{2}{\numObj \choose 2}} \left(1 + \band{\numObj
  \choose 2}\frac{\kendall(\sigma,
  \sigma')}{2\power}\right)^\power \xrightarrow{\power \rightarrow \infty} e^{-\band n_{\numObj}(\sigma, \sigma')}=\mallows(\sigma, \sigma').
\end{align*}
We construct a feature map for the Mallows kernel by exploiting this observation. Specifically, we derive a feature map of the $\band$-normalized polynomial kernel and compute its limit as the degree $\power$ of the kernel goes to infinity. 
Similar to Section~\ref{SecFeatures}, we define the feature map $\EmbdNor{}~\colon~\simg~\to~\RR^{{\numObj \choose 2} + 1}$:
\begin{align*}
\EmbdNor{}(\sigma)_{t_r} \defn 2\indi_{\{\sigma(a_r) < \sigma(b_r)\}} -1,
\end{align*}
where the coordinates are indexed by the ordered pairs $t_r = \{a_r,
b_r\}$ with $a_r, b_r \in [\numObj]$ and $a_r < b_r$. 
Let $\twosets^*$ denote the set of ${\numObj \choose 2}$ such tuples. Then, a binomial expansion yields
\begin{align*}
\LamPolyKer{\power}{\band}(\sigma, \sigma') &= e^{-\frac{\band}{2}{\numObj \choose 2}}\left(1 + \frac{\band}{2\power}\sum_{i = 1}^{{\numObj \choose 2}} \EmbdNor{}(\sigma)_{t_i}\EmbdNor{}(\sigma')_{t_i}\right)^\power\\
&= e^{-\frac{\band}{2}{\numObj \choose 2}}\sum_{c_0 + \ldots + c_{\numObj \choose 2} = \power} \frac{\power !}{c_0! c_1!\ldots c_{{\numObj\choose 2}}!}\left(\frac{\band}{2\power}\right)^{\power - c_0} \prod_{i = 1}^{\numObj \choose 2}\EmbdNor{}(\sigma)_{t_i}^{c_i} \prod_{j = 1}^{\numObj \choose 2}\EmbdNor{}(\sigma')_{t_j}^{c_j}. 
\end{align*}

Note that $(\EmbdNor{}(\sigma))_{t_i}^2 = 1$ for any $t_i \in \twosets^*$ and any $\sigma \in \simg$. For any $A \subset \twosets^*$ we denote $\EmbdNor{}(\sigma)_A \defn \prod_{t_i \in A} \EmbdNor{}(\sigma)_{t_i}$, and $\EmbdNor{}(\sigma)_\emptyset = 1$. Hence, we can simplify the above expression to
\begin{align*}
\LamPolyKer{\power}{\band}(\sigma, \sigma') &= e^{-\frac{\band}{2}{\numObj \choose 2}}\sum_{A \subset \twosets^*} \EmbdNor{}(\sigma)_A \EmbdNor{}(\sigma')_A\sum_{\substack{c_0 + \ldots + c_{\numObj \choose 2} = \power \\ c_i \text{ odd when } t_i \in A \\ c_i \text{ even when } t_i \not \in A}} \frac{\power !}{c_0! c_1!\ldots c_{{\numObj\choose 2}}!}\left(\frac{\band}{2\power}\right)^{\power - c_0}.
\end{align*}

By symmetry the second sum on the right hand side depends only on the power $\power$ and the size of the set $A$. Therefore, if we define 
\begin{align*}
\delta(\power, r) = \sum_{\substack{c_0 + \ldots + c_{\numObj \choose 2} = \power \\ c_i \text{ odd when } 1\leq i \leq r \\ c_i \text{ even when } r < i}} \frac{\power !}{c_0! c_1!\ldots c_{{\numObj\choose 2}}!}\left(\frac{\band}{2\power}\right)^{\power - c_0},
\end{align*}
we find that 
\begin{align*}
\LamPolyKer{\power}{\band}(\sigma, \sigma') &= e^{-\frac{\band}{2}{\numObj \choose 2}}\sum_{A \subset \twosets^*} \EmbdNor{}(\sigma)_A \EmbdNor{}(\sigma')_A \delta(\power, |A|).
\end{align*}

We are left to compute the limit of $\delta(\power, |A|)$ as $\power \rightarrow \infty$, and to this end  we construct a generating function for the sequence $\delta(\power, r)$ by defining
\begin{align*}
F(z) &:= \power! \left(\frac{\band}{2\power}\right)^\power e^{\frac{2\power}{\band}z} \left(\frac{e^z - e^{-z}}{2}\right)^r \left(\frac{e^z + e^{-z}}{2}\right)^{{\numObj \choose 2} - r}.
\end{align*}
 By Taylor expanding each term $e^z$ individually we see that the function $F(z)$ is the generating function of $d(\power, r)$ for $0 \leq r \leq {\numObj \choose 2}$. More precisely, $\delta(\power, r)$ is the  $p$-th coefficient of the generating function $F(z)$. By expanding $F(z)$ into a linear combination of exponentials we can compute $\delta(\power, r)$ and study its asymptotic behavior. 
\begin{align*}
F(z) &= \power! \left(\frac{\band}{2\power}\right)^\power \frac{e^{\frac{2\power}{\band}z - {\numObj \choose 2}z}}{2^{\numObj \choose 2}} \left(e^{2z} - 1\right)^r \left( e^{2z} + 1\right)^{{\numObj \choose 2} - r}\\
&= \frac{\power!}{2^{\numObj \choose 2}} \left(\frac{\band}{2\power}\right)^\power e^{\frac{2\power}{\band}z - {\numObj \choose 2}z}\sum_{i = 0}^{r}\sum_{j = 0}^{{\numObj \choose 2} - r}e^{2(i + j)z} (-1)^{r - i}{r\choose i}{{\numObj \choose 2} - r \choose j}\\
&= \frac{\power!}{2^{\numObj \choose 2}} \left(\frac{\band}{2\power}\right)^\power\sum_{i = 0}^{r}\sum_{j = 0}^{{\numObj \choose 2} - r} e^{\left(\frac{2\power}{\band} - {\numObj \choose 2} + 2(i + j)\right)z} (-1)^{r - i}{r\choose i}{{\numObj \choose 2} - r \choose j}
\end{align*}

Therefore,
\begin{align*}
\delta(\power, r) &= \frac{1}{2^{\numObj \choose 2}} \left(\frac{\band}{2\power}\right)^\power\sum_{i = 0}^{r}\sum_{j = 0}^{{\numObj \choose 2} - r}\left(\frac{2\power}{\band} - {\numObj \choose 2} + 2(i + j)\right)^\power (-1)^{r - i}{r\choose i}{{\numObj \choose 2} - r \choose j}\\
&= \frac{1}{2^{\numObj \choose 2}}\sum_{i = 0}^{r}\sum_{j = 0}^{{\numObj \choose 2} - r}\left(1 - \frac{\frac{\band}{2}\left({\numObj \choose 2} - 2(i + j)\right)}{\power}\right)^\power (-1)^{r - i}{r\choose i}{{\numObj \choose 2} - r \choose j}\\
&\xrightarrow{\power \rightarrow \infty} \frac{1}{2^{\numObj \choose 2}}\sum_{i = 0}^{r}\sum_{j = 0}^{{\numObj \choose 2} - r}e^{-\frac{\band}{2}\left({\numObj \choose 2} - 2(i + j)\right)} (-1)^{r - i}{r\choose i}{{\numObj \choose 2} - r \choose j}\\
&=  \frac{e^{-\frac{\band}{2}{\numObj \choose 2}}}{2^{\numObj \choose 2}}\sum_{i = 0}^{r}\sum_{j = 0}^{{\numObj \choose 2} - r}e^{\band i} e^{ \band j}(-1)^{r - i}{r\choose i}{{\numObj \choose 2} - r \choose j}\\
&=\frac{e^{-\frac{\band}{2}{\numObj \choose 2}}}{2^{\numObj \choose 2}} \left(e^\band - 1\right)^r \left(e^\band + 1\right)^{{\numObj \choose 2} - r}. 
\end{align*}

The conclusion follows.


\subsection{Proof of Theorem~\ref{ThmKendallTransform}}
\label{SecProofKendall}

For $d=2$ and $d=3$, the irreps $\rho_\lambda$ are easy to describe in
closed-form~\cite{diaconis1988group}; in particular, we have
\begin{align*}
& \widehat \kendall(\rho_{(2)}) = 0, \quad
  \widehat\kendall(\rho_{(1,1)}) = 2\\ &\widehat \kendall(\rho_{(3)})
  = 0,\quad \widehat\kendall(\rho_{(2,1)}) = 
\begin{pmatrix}
\frac{2}{3} & \frac{2}{\sqrt{3}}\\ \frac{2}{\sqrt{3}} & 2
\end{pmatrix},
\quad \widehat\kendall(\rho_{(1,1,1)}) = \frac{2}{3}.
\end{align*}

Accordingly, it remains to prove Theorem~\ref{ThmKendallTransform}
when $d \geq 4$.  Each representation $\rho \colon \simg \rightarrow
\CC^{d_\rho \times d_\rho}$ defines a collection of $d_\rho^2$
functions $\sigma \mapsto \rho(\sigma)_{ij}$ on the symmetric
group. An important result in the representation theory states that
the functions defined by the irreps $\rho_\lambda$ form a basis for
the space of functions over the symmetric group. To exploit this 
fact, we express the Kendall kernel as a linear combination of the 
functions defined by the overcomplete representation $\tau_{(d - 2, 1, 1)}$
defined in Section~\ref{SecAppRep}, equation~\ref{EqOverCompDefn}. 

\begin{lemma}
\label{LemKendallStepI}
The Kendall function $\sigma \mapsto \kendall(\sigma)$ is a linear combination of the functions defined by the representation $\tau_{(d - 2, 1, 1))}$.
\end{lemma}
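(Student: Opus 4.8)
The plan is to use the right-invariance of the Kendall kernel to reduce it to a single-argument function on $\simg$, and then recognize that function as a linear combination of the coordinate (matrix-entry) functions of the permutation representation $\tau_{(\numObj-2,1,1)}$. Using the fact (recorded in the main text) that $n_d(\sigma,\sigma')=\ii(\sigma'\circ\sigma^{-1})$ together with $n_c={\numObj\choose2}-n_d$, the single-argument form of the kernel is
$$\kendall(\pi) = 1 - \frac{2\,\ii(\pi)}{{\numObj\choose2}}, \qquad \ii(\pi) = \sum_{i<j}\indi_{\{\pi(i)>\pi(j)\}}.$$
Hence it suffices to express each inversion indicator $\indi_{\{\pi(i)>\pi(j)\}}$, and the constant function $1$, as linear combinations of the functions $\pi\mapsto\tau_{(\numObj-2,1,1)}(\pi)_{\,\cdot\,,\,\cdot\,}$.

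First I would recall, from the definition in Section~\ref{SecAppRep}, equation~\eqref{EqOverCompDefn}, that $\tau_{(\numObj-2,1,1)}$ is the (reducible, ``overcomplete'') permutation representation on the $\numObj(\numObj-1)$ ordered pairs of distinct elements of $[\numObj]$. In the natural basis indexed by such pairs, its matrix entries are indicator functions,
$$\tau_{(\numObj-2,1,1)}(\pi)_{(a,b),(c,e)} = \indi_{\{(\pi(c),\pi(e))=(a,b)\}},$$
so that the functions defined by this representation are precisely the events ``the ordered pair $(c,e)$ is sent to $(a,b)$.''

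The key step is then the elementary identity
$$\indi_{\{\pi(i)>\pi(j)\}} = \sum_{a>b}\indi_{\{(\pi(i),\pi(j))=(a,b)\}} = \sum_{a>b}\tau_{(\numObj-2,1,1)}(\pi)_{(a,b),(i,j)},$$
obtained by partitioning the event $\{\pi(i)>\pi(j)\}$ according to the exact values of $(\pi(i),\pi(j))$. Summing over $i<j$ writes $\ii(\pi)$ as a linear combination of matrix entries of $\tau_{(\numObj-2,1,1)}$. Moreover, for any fixed pair $(i,j)$ one has $1=\sum_{(a,b)}\tau_{(\numObj-2,1,1)}(\pi)_{(a,b),(i,j)}$, since $\pi$ sends $(i,j)$ to exactly one ordered pair. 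Substituting both expressions into the display for $\kendall(\pi)$ yields the claim.

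The only genuine obstacle is bookkeeping of the group action: I must confirm that the convention fixing $\tau_{(\numObj-2,1,1)}$ in equation~\eqref{EqOverCompDefn} is the one under which $\indi_{\{\pi(i)>\pi(j)\}}$ equals the stated sum of matrix entries — that is, that the module acts on ordered pairs by $\pi\cdot(c,e)=(\pi(c),\pi(e))$ with the row/column placement chosen consistently. Should the supplement's convention instead involve $\pi^{-1}$, the identical argument survives after a relabeling of rows and columns or after replacing $\pi$ by $\pi^{-1}$; but this correspondence must be pinned down precisely so that Lemma~\ref{LemKendallStepI} feeds correctly into the subsequent decomposition of $\tau_{(\numObj-2,1,1)}$ into irreps. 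Everything else — that matrix entries of a representation are honest functions on $\simg$, and that their linear span is closed under the operations above — is immediate.
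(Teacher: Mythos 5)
Your proposal is correct and follows essentially the same route as the paper's own proof: both write $\kendall(\sigma) = 1 - 2{d\choose 2}^{-1}\sum_{i<j}\indi_{\{\sigma(i)>\sigma(j)\}}$ and then expand each inversion indicator as $\sum_{l<r}\indi_{\{\sigma(i)=r,\,\sigma(j)=l\}}$, these indicators being precisely the matrix-entry functions of $\tau_{(d-2,1,1)}$. The row/column convention issue you flag is real but harmless, exactly as you say, since transposing (or replacing $\sigma$ by $\sigma^{-1}$) permutes the set of matrix-entry functions without changing their linear span.
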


\begin{proof}
A rough sketch of the argument is as follows. The Kendall function is a linear combination of indicator functions
$\indi_{\{\sigma(i) > \sigma(j)\}}$ (plus a constant). The result follows because
each of these functions is a linear combination of the indicator functions $\indi_{\{\sigma(i) = l, \sigma(j) = r\}}$, which are exactly the functions defined by $\tau_{(\numObj - 2, 1, 1)}$. 

Formally, to prove the
claim it suffices to express the function $\kernel_\tau$ as a linear
combination of the functions defined by $\rho_{(d)}$,
$\rho_{(d-1,1)}$, $\rho_{(d-2,2)}$, and $\rho_{(d-2,1,1)}$. James'
submodule theorem states that
\begin{align*}
\tau_{(d-2,1,1)} \equiv \rho_{(d)}\oplus \rho_{(d-1,1)} \oplus
\rho_{(d-1,1)} \oplus \rho_{(d-2,2)} \oplus \rho_{(d-2,1,1)}.
\end{align*}
Therefore, we just have to show that the Kendall function is a linear
combination of the functions defined by $\tau_{(d-2,1,1)}$. We have
\begin{align*}
k_\tau(\sigma) = 1 - 2\frac{\ii(\sigma)}{{d\choose 2}} & = 1 - 2{d
  \choose 2}^{-1} \sum_{i<j} \indi_{\{\sigma(i) > \sigma(j)\}}\\ 
& = 1 - 2{d\choose 2}^{-1} \sum_{i<j} \sum_{l < r}\indi_{\{\sigma(i) =
  r, \sigma(j) = l\}}.
\end{align*}
The functions $\indi_{\{\sigma(i) = r, \sigma(j) = l\}}$ are defined
by $\tau_{(\numObj - 2,1,1)}$ by construction, completing the proof.
\end{proof}

Our next step in proving Theorem~\ref{ThmKendallTransform} is to
compute the Fourier transform of the Kendall transform at the
representations $\tau_{(\numObj)}$, $\tau_{(\numObj - 1, 1)}$,
$\tau_{(\numObj - 2, 2)}$, and $\tau_{(\numObj - 2, 1, 1)}$.  Our next
lemma summarizes the results of these computations, which though
technical are conceptually straightforward. First define
\begin{itemize}
\item vector $u \in \RR^\usedim$ with entries $u_i = \numObj - 2i + 1$
  for $i = 1, \ldots, \numObj$
\item vector $w \in \RR^{{\numObj \choose 2}}$ with entries $w_{\{i,
  j\}} = 2d - 2(i + j) + 2$ for $1 \leq i < j \leq \numObj$.
\item vectors $v_{1}$, $v_2$, and $v_3$ in $\RR^{\numObj(\numObj -
  1)}$ with entries
\begin{align*}
[v_1]_{(i, j)} &= 1 - 2\indi_{\{i > j\}},\\
[v_2]_{(i, j)} &= d - 2i + 2 - 2\indi_{\{i < j\}},\\
[v_3]_{(i, j)} &= d - 2j + 2 - 2\indi_{\{i > j\}}.
\end{align*}
\end{itemize}
With this notation, we have the following:
\begin{lemma}
\label{LemKendallStepII}
The Fourier transform of the Kendall kernel satisfies the identities:
\begin{subequations}
\begin{align}
\widehat\kendall (\tau_{(\numObj)}) & = 0\text{, } \quad
\widehat\kendall(\tau_{(\numObj - 1, 1)}) = \frac{(\numObj
  -2)!}{{\numObj \choose 2}}uu^\top, \\
 \widehat\kendall(\tau_{(\numObj - 2, 2)}) & = \frac{(\numObj -
   3)!}{{\numObj \choose 2}}ww^\top, \qquad \text{and} \\
\widehat\kendall(\tau_{(\numObj - 2, 1, 1)}) & = \frac{(\numObj -
  2)!}{{\numObj \choose 2}}v_1 v_1^\top + \frac{(\numObj -
  3)!}{{\numObj \choose 2}} v_2 v_2^\top + \frac{(\numObj -
  3)!}{{\numObj \choose 2}} v_3 v_3^\top.
\end{align}
\end{subequations}
\end{lemma}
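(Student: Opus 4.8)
The plan is to compute each Fourier matrix $\widehat\kendall(\tau_\lambda) = \sum_{\sigma \in \simg}\kendall(\sigma)\,\tau_\lambda(\sigma)$ directly, exploiting two facts. First, I would rewrite the Kendall function in its sign form $\kendall(\sigma) = {\numObj \choose 2}^{-1}\sum_{p<q}\sign(\sigma(q)-\sigma(p))$, which is immediate from the identity $\kendall(\sigma) = 1 - 2\,\ii(\sigma)/{\numObj \choose 2}$ used in Lemma~\ref{LemKendallStepI}. Second, each $\tau_\lambda$ appearing here is a permutation representation on $\lambda$-tabloids, so every entry of $\tau_\lambda(\sigma)$ is an indicator that $\sigma$ carries one tabloid to another---equivalently, an indicator prescribing the $\sigma$-values on one or two coordinates. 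Consequently the $(s,t)$ entry of $\widehat\kendall(\tau_\lambda)$ equals $\sum_\sigma \kendall(\sigma)$ taken over the permutations whose values on the constrained coordinates are fixed by $s$ and $t$. The crucial simplification is that, after substituting the sign form, any pair $\{p,q\}$ disjoint from the constrained coordinates contributes $\sum \sign(\sigma(q)-\sigma(p)) = 0$ by the symmetry swapping the values at $p$ and $q$; only the finitely many pairs meeting the constrained coordinates survive.

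For $\tau_{(\numObj)}$ (the trivial representation) no coordinate is constrained, so every pair cancels and $\widehat\kendall(\tau_{(\numObj)}) = \sum_\sigma \kendall(\sigma) = 0$. For $\tau_{(\numObj-1,1)}$ a single coordinate $b$ is pinned to the value $a$; only pairs containing $b$ survive, and summing $\sign(\cdot)$ over the free value at the other coordinate yields $(\numObj-2)!\,(\numObj-2a+1)$ per such pair, with a sign depending on whether the free coordinate sits before or after $b$. Summing over the $\numObj-b$ coordinates after $b$ and the $b-1$ before it produces the factor $(\numObj-2b+1)$, so the entry is $\frac{(\numObj-2)!}{{\numObj \choose 2}}(\numObj-2a+1)(\numObj-2b+1)$, i.e. $\frac{(\numObj-2)!}{{\numObj \choose 2}}uu^\top$ with $u_i = \numObj-2i+1$.

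For the two-coordinate cases I would organize the surviving pairs into three groups: the internal pair joining the two constrained coordinates, and the two families of pairs meeting exactly one of them. For $\tau_{(\numObj-2,1,1)}$, where the ordered images $(a,b)$ of positions $(c,e)$ are fixed, the internal pair contributes $\sign(b-a)\sign(e-c)$, which is exactly $v_1$-separable and gives the $\frac{(\numObj-2)!}{{\numObj \choose 2}}v_1v_1^\top$ term with $[v_1]_{(i,j)} = 1-2\indi_{\{i>j\}}$; the two single-touch families, after averaging the free value over $[\numObj]\setminus\{a,b\}$, contribute affine-in-value and affine-in-position expressions that separate as $v_2v_2^\top$ and $v_3v_3^\top$, now carrying the coefficient $\frac{(\numObj-3)!}{{\numObj \choose 2}}$ because one extra coordinate is summed freely. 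For $\tau_{(\numObj-2,2)}$ the images form an unordered pair, so the entry sums over both orderings of $(a,b)$; the internal-pair term $\sign(b-a)+\sign(a-b)$ cancels, killing the $v_1$ piece, and the single-touch terms symmetrize into a single rank-one matrix $\frac{(\numObj-3)!}{{\numObj \choose 2}}ww^\top$ with $w_{\{i,j\}} = (\numObj-2i+1)+(\numObj-2j+1) = 2\numObj-2(i+j)+2$.

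The main obstacle is the exact bookkeeping for $\tau_{(\numObj-2,1,1)}$. When I sum a single-touch pair over the free coordinates and free values, I must correctly exclude the second constrained coordinate $e$ and the value $b$ it carries; these exclusions are precisely what generate the indicator corrections $\indi_{\{i<j\}}$, $\indi_{\{i>j\}}$ and the shift from $\numObj-2i+1$ to $\numObj-2i+2$ in $v_2$ and $v_3$. Verifying that these corrections assemble into exactly two rank-one outer products with the stated vectors---rather than leaving a residual cross term---is the delicate step; everything else is the symmetry-driven cancellation described above together with elementary counting.
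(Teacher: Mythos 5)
Your proposal follows essentially the same route as the paper's proof: both expand $\kendall$ as ${\numObj \choose 2}^{-1}\sum_{i<j} g_{ij}$ with $g_{ij}(\sigma)=\sign(\sigma(j)-\sigma(i))$, compute each entry of $\widehat g_{ij}(\tau_\lambda)$ by a case analysis on how the pair $\{i,j\}$ meets the constrained coordinates (with the disjoint case vanishing by the value-swapping symmetry), and then sum over pairs to assemble the stated rank-one outer products, including the cancellation of the ``internal pair'' term for $\tau_{(\numObj-2,2)}$. The bookkeeping you flag as delicate for $\tau_{(\numObj-2,1,1)}$ is exactly the case table the paper tabulates, and your identification of where the $\indi$-corrections and the $(\numObj-3)!$ versus $(\numObj-2)!$ factors come from is correct.
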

\begin{proof}
Lemma~\ref{LemKendallStepII} states in closed form the values of
$\widehat\kernel_\tau$ evaluated at the four representations
$\tau_{(\numObj)}$, $\tau_{(\numObj - 1, 1)}$, $\tau_{(\numObj - 2,
  2)}$, and $\tau_{(\numObj - 2, 1, 1)}$. We compute these values one
at a time.

\paragraph{Computing $\widehat\kernel_\tau(\tau_{(\numObj)})$.}
We first show that $\widehat\kernel_\tau(\tau_{(\numObj)}) = 0$.
Recall that $\tau_{(d)}$ is the trivial representation, equal to $1$
at all permutations, so we need to check that $\sum_{\sigma \in \simg}
1 - 2{\numObj\choose 2}^{-1}\ii(\sigma) = 0$. Note that we have
\begin{align*}
\sum_{\sigma \in\simg} \ii(\sigma)= \sum_{\sigma \in \simg} \sum_{i<j}
\indi_{\{\sigma(i) > \sigma(j)\}} = \sum_{i<j} \sum_{\sigma \in \simg}
\indi_{\{\sigma(i) > \sigma(j)\}} = \sum_{i< j} \frac{\numObj!}{2} =
\frac{\numObj!{\numObj \choose 2}}{2},
\end{align*}
so that the conclusion follows.


\paragraph{Computing $\widehat \kernel_\tau(\tau_{(\numObj - 1, 1)})$.}
In this case, we show that $\widehat\kernel_\tau(\tau_{(\numObj - 1,
  1)}) = \frac{(\numObj -2)!}{{\numObj \choose 2}}vv^\top$, where the
vector $v \in \RR^\numObj$ has components $v_r = \numObj - 2r + 1$.
Consider the functions $g_{ij}$ on $\simg$ defined by $g_{ij}(\sigma)
= 1 - 2\indi_{\{\sigma(i)> \sigma(j)\}}$, for all $i < j$. Then
\begin{align*}
\kernel_\tau(\sigma) = \frac{1}{{\numObj\choose 2}}\sum_{i<j}
g_{ij}(\sigma)\; \text{ and hence }\; \widehat\kernel(\rho) =
\frac{1}{{\numObj \choose 2}}\sum_{i<j} \widehat g_{ij}(\rho),
\end{align*}
for any representation $\rho$. 

We compute $\widehat g_{ij}(\tau_{(\numObj-2,2)})$ for each tuple $i <
j$ and then sum up the results. The rows of $\tau_{(\numObj-1,1)}$ are
indexed by tabloids of shape $(d-1,1)$. Each of these tabloids is
fully specified by the index contained in the second row.  We identify
the tabloids of shape $(\numObj-1,1)$ with those indices. Let $t_1$
and $t_2$ be two indices in $[d]$. Then
\begin{align*}
\widehat g_{ij}(\tau_{(\numObj-1,1)})_{t_1 t_2} = \sum_{\sigma \in
  \simg} \left(1 - 2\indi_{\{\sigma(i) >
  \sigma(j)\}}\right)\indi_{\{\sigma(t_{1}) = t_{2}\}}
\end{align*}

There are three cases to consider. First, suppose that $t_1$ is
distinct from both $i$ and $j$. There are $(d-1)!$ permutations
$\sigma$ that satisfy $\sigma(t_{1}) = t_{2}$, out of which exactly
half satisfy $g_{ij}(\sigma) = 1$ and the other half satisfy
$g_{ij}(\sigma) = -1$.  Therefore, we are guaranteed that $\widehat
g_{ij}(\sigma)_{t_1 t_2} = 0$ when $t_{1} \not \in \{i,j\}$.

Otherwise, we may assume that $t_{1} = i$. Then, out of the $(\numObj
- 1)!$ permutations that satisfy $\sigma(i) = t_{2}$ there are $(t_{2}
- 1)(\numObj - 2)!$ permutations that satisfy $\sigma(i) > \sigma(j)$
and $(\numObj - t_{2})(\numObj - 2)!$ that satisfy the opposite
inequality. Hence, $\widehat g_{ij}(\tau_{(d-1,1)}) = (\numObj -
2t_{2} + 1)(\numObj - 2)!$ when $t_{1}= i$.

The remaining (third) case is when $t_{1} = j$. Then, out of the
$(\numObj - 1)!$ permutations with $\sigma(j) = t_{2}$ there are
$(\numObj - t_{2})(\numObj - 2)!$ with $\sigma(i)> \sigma(j)$ and
$(t_{12}- 1)(\numObj - 2)!$ with $\sigma(i)< \sigma(j)$. Therefore
$\widehat g_{ij}(\tau_{(\numObj - 1,1)}) = -(\numObj - 2\tau_{2} +
1)(\numObj - 2)!$ when $t_{1} = j$. To summarize, we have
\begin{align*}
\widehat g_{ij}(\tau_{(\numObj - 1,1)})_{t_{1}t_{2}} =
\begin{cases}
0 & \text{ if } t_{1} \not\in \{i,j\}\\
(\numObj - 2t_{2} + 1)(\numObj - 2)! & \text{ if } t_{1} = i\\
(2t_{2} - \numObj - 1)(\numObj-2)! & \text{ if } t_{1} =j
\end{cases}.
\end{align*}

Now we need to sum the Fourier transforms of the functions $g_{ij}$ to
obtain the Fourier transform of $\kernel_\tau$. We have
\begin{align*}
\widehat\kernel_\tau(\tau_{(\numObj - 1, 1)})_{t_1 t_2} &= {\numObj
  \choose 2}^{-1}\sum_{i<j}\widehat g_{ij}(\tau_{(\numObj - 1, 1)}) \\
& = {\numObj \choose 2}^{-1} \sum_{t_1 = i < j}(\numObj - 2t_2 +
1)(\numObj - 2)! + {\numObj \choose 2}^{-1}\sum_{i<j = t_1}(2t_2 -
\numObj -1)(\numObj - 2)! \\
& = {\numObj \choose 2}^{-1}(\numObj - t_1)(\numObj - 2t_2 +
1)(\numObj - 2)!  + {\numObj \choose 2}^{-1}(t_1 - 1)(2t_2 -\numObj -
1)(\numObj - 2)!\\
& = {\numObj \choose 2}^{-1}(\numObj - 2t_1 + 1)(\numObj - 2t_2 +
1)(\numObj - 2)!,
\end{align*}
as claimed.


\paragraph{Computing $\widehat\kernel_\tau(\tau_{(\numObj - 2, 2)})$.}
In this case, we show that $\widehat\kernel_\tau(\tau_{(\numObj - 2,
  2)}) = \frac{(\numObj - 3)!}{{\numObj \choose 2}}ww^\top$, where the
vector $w \in \RR^{{\numObj \choose 2}}$ has entries $w_{\{r_1, r_2\}}
= 2d - 2(r_1 + r_2) + 2$ for $1 \leq r_1 < r_2 \leq \numObj$.

The entries of $\widehat\kernel_\tau(\tau_{(\numObj - 2, 2)})$ are
indexed by tabloids of shape $(\numObj - 2, 2)$ which can be
identified with the set of two indices contained in the second
row. Therefore we can identify the tabloids of shape $(\numObj - 2,2)$
with sets of two indices.  Fix two such sets $t_1 = \{t_{11},
t_{12}\}$ and $t_{2} =\{t_{21}, t_{22}\}$. Once again we use the
functions $g_{ij}(\sigma) \defn 1 - 2\indi_{\{\sigma(i) >
  \sigma(j)\}}$.  For these functions, we have
\begin{align*}
\widehat g_{ij}(\tau_{(\numObj - 2, 2)})_{t_1 t_2} & = \sum_{\sigma
  \in \simg} g_{ij}(\sigma) \indi_{\{\sigma(\{t_{11},t_{12}\}) =
  \{t_{21}, t_{22}\}\}} \\ 
& = \sum_{\sigma \in \simg}\left(1 -
2\indi_{\{\sigma(i)>\sigma(j)\}}\right)\left(\indi_{\{\sigma(t_{11}) =
  t_{21}, \sigma(t_{12}) = t_{22}\}} + \indi_{\{\sigma(t_{11}) =
  t_{22}, \sigma(t_{12}) = t_{21}\}}\right).
\end{align*}
By breaking into four cases, similar to the proof the computation of
$\widehat\kernel_\tau(\tau_{(\numObj - 2, 2)})$, we obtain
\begin{align*}
\widehat g_{ij}(\tau_{(d-2,2)})_{t_{1} t_{2}} =
\begin{cases}
0 &\text{ if } \{t_{11},t_{12}\} \cap \{i,j\} = \emptyset\\ 0 &\text{
  if } \{t_{11}, t_{12}\} = \{i,j\}\\ (2d - 2(t_{21} + t_{22}) +
2)(d-3)! &\text{ if }\{t_{11}, t_{12}\} \cap \{i\} = \{i\}\\ (2(t_{21}
+ t_{22})- 2d - 2)(d-3)! &\text{ if }\{t_{11}, t_{12}\} \cap \{i\} =
\{j\}
\end{cases}.
\end{align*}
Summing the terms $\widehat g_{ij}(\tau_{(\numObj - 2, 2)})$ over
pairs $i < j$ yields the result.


\paragraph{Computing $\widehat\kernel_\tau(\tau_{(\numObj - 2, 1, 1)})$.}
We show that
\begin{align*}
\widehat\kernel_\tau(\tau_{(\numObj - 2, 1, 1)}) = \frac{(\numObj -
  2)!}{{\numObj \choose 2}}v_1 v_1^\top + \frac{(\numObj -
  3)!}{{\numObj \choose 2}} v_2 v_2^\top + \frac{(\numObj -
  3)!}{{\numObj \choose 2}} v_3 v_3^\top,
\end{align*} 
where $v_{1}$, $v_2$, and $v_3$, are the vectors in
$\RR^{\numObj(\numObj - 1)}$ defined by
\begin{align*}
[v_1]_{(r_1, r_2)} &= 1 - 2\indi_{\{r_1 > r_2\}},\\ [v_2]_{(r_1, r_2)}
&= d - 2r_1 + 2 - 2\indi_{\{r_1 < r_2\}},\\ [v_3]_{(r_1, r_2)} &= d -
2r_2 + 2 - 2\indi_{\{r_1 > r_2\}}.
\end{align*}

The same ideas used in the computation of
$\widehat\kernel_\tau(\tau_{(\numObj - 2, 2)})$ apply here as
well. However, the analysis is a bit more detailed because there are
more cases to consider. The entries of
$\widehat\kernel_\tau(\tau_{(\numObj - 2, 1, 1)})$ are indexed by
tabloids of shape $(\numObj - 2, 1, 1)$. These tabloids are completely
specified by the entries contained in the second and third
rows. Hence, we can identify them with ordered tuples in
$[d]^2$. Fixing two such tuples $t_1 = (t_{11}, t_{12})$ and $t_2 =
(t_{21}, t_{22})$, with $t_{11} \neq t_{12}$ and $t_{21} \neq t_{22}$,
we then have
\begin{align*}
\widehat g_{ij}(\tau_{(\numObj - 2, 1, 1)})_{t_1 t_2} &= \sum_{\sigma
  \in \simg} g_{ij}(\sigma) \indi_{\{\sigma(t_{11}) = t_{21},
  \sigma(t_{12}) = t_{22}\}}  \\
& = \sum_{\sigma \in \simg}\left(1 -
2\indi_{\{\sigma(i)>\sigma(j)\}}\right)\indi_{\{\sigma(t_{11}) =
  t_{21}, \sigma(t_{12}) = t_{22}\}}.
\end{align*}
Arguments similar to the ones used in the computation of $\widehat
g_{ij}(\tau_{(\numObj - 1, 1)})$ enable us to compute $\widehat
g_{ij}(\tau_{(\numObj - 2, 1, 1)})$ as well. In order to make the
result more readable, let us split them into to cases: $t_{21} <
t_{22}$ and $t_{21} > t_{22}$. Then we obtain
\begin{subequations}
\begin{align}
\widehat g_{ij}(\tau_{(\numObj - 2, 1, 1)})_{t_{1} t_{2}}  & =
\begin{cases}
0 &\text{ if } \{t_{11},t_{12}\} \cap \{i,j\} = \emptyset\\
 (\numObj - 2)! &\text{ if } t_{11} = i,\; t_{12} = j,\; t_{21} <
t_{22}\\
-(\numObj - 2)! &\text{ if } t_{11} = j,\; t_{12} = i,\; t_{21} <
t_{22} \\
(\numObj - 2t_{21})(\numObj - 3)! &\text{ if } t_{11} = i,\;
t_{12}\neq j,\; t_{21} < t_{22}\\ 
(\numObj - 2t_{22} + 2)(\numObj - 3)! &\text{ if } t_{11} \neq j,\;
t_{12} = i,\; t_{21} < t_{22}\\ 
(2t_{21} - \numObj)(\numObj - 3)!  &\text{ if } t_{11} = j,\;
t_{12}\neq i,\; t_{21} < t_{22}\\ 
(2t_{22} - \numObj - 2)(\numObj - 3)! &\text{ if } t_{11} \neq i,\;
t_{12} = j,\; t_{21} < t_{22}\\
\end{cases}.
\end{align}
\begin{align}
\widehat g_{ij}(\tau_{(\numObj - 2, 1, 1)})_{t_{1} t_{2}} & =
\begin{cases}
0 &\text{ if } \{t_{11},t_{12}\} \cap \{i,j\} = \emptyset\\ -(\numObj
- 2)! &\text{ if } t_{11} = i,\; t_{12} = j,\; t_{21} >
t_{22}\\ (\numObj - 2)! &\text{ if } t_{11} = j,\; t_{12} = i,\;
t_{21} > t_{22}\\ (\numObj - 2t_{21} + 2)(\numObj - 3)! &\text{ if }
t_{11} = i,\; t_{12}\neq j,\; t_{21} > t_{22}\\ (\numObj -
2t_{22})(\numObj - 3)! &\text{ if } t_{11} \neq j,\; t_{12} = i,\;
t_{21} > t_{22}\\ (2t_{21} - \numObj - 2)(\numObj - 3)! &\text{ if }
t_{11} = j,\; t_{12}\neq i,\; t_{21} > t_{22}\\ (2t_{22} -
\numObj)(\numObj - 3)! &\text{ if } t_{11} \neq i,\; t_{12} = j,\;
t_{21} > t_{22}\\
\end{cases}.
\end{align}
\end{subequations}
The conclusion then follows by computing the sum $\sum_{i < j} g_{ij}
(\tau_{(\numObj - 2, 1, 1)})_{t_1 t_2}$ in the four possible cases
obtained from the orderings of $t_{11}$ and $t_{12}$, and of $t_{21}$
and $t_{22}$.
\end{proof}

At this point, Theorem~\ref{ThmKendallTransform} follows from
Lemma~\ref{LemKendallStepI} and Lemma~\ref{LemKendallStepII}. 
Lemma~\ref{LemKendallStepI} together with decomposition~\eqref{EqTauIV}
imply that $\widehat\kendall(\rho_\lambda) = 0$ for all $\lambda \lhd (\numObj -2, 1, 1)$
because the functions defined by irreps are a basis for the space of functions on $\simg$. 
Next, observe that
$\widehat\kendall(\tau_{(\numObj)}) = 0$ is equivalent to
$\widehat \kendall(\rho_{(\numObj)}) = 0$. Then, since the matrix
$\widehat\kendall(\tau_{(\numObj - 1, 1)})$ has rank one, the
decomposition~\eqref{EqTauII} of the representation $\tau_{(\numObj -
  1, 1)}$ implies that $\widehat\kendall(\rho_{(\numObj - 1, 1)})$
has rank one as well.  Furthermore, since both matrices
$\widehat\kendall(\tau_{(\numObj - 1, 1)})$ and
$\widehat\kendall(\tau_{(\numObj - 2, 2)})$ have rank one, from
decomposition~\eqref{EqTauIII} of the representation $\tau_{(\numObj -
  2, 2)}$ we obtain $\widehat\kendall(\rho_{(\numObj - 2, 2)}) =
0$. Finally, since the matrix $\widehat\kendall(\tau_{(\numObj -
  2, 1, 1)})$ has rank three, from decomposition~\eqref{EqTauIV} of
the representation $\tau_{(\numObj - 2, 1, 1)}$ we know that
$\widehat\kendall(\rho_{(\numObj - 2, 1, 1)})$ has rank one, which
completes the proof of Theorem~\ref{ThmKendallTransform}.


\subsection{Proof of Theorem~\ref{ThmPoly}}
\label{SecProofPoly}

We use an approach similar to the proof of
Theorem~\ref{ThmKendallTransform}.

\begin{lemma}
\label{LemPolyStepI}
The kernels $\polyKer{\power}$, $\NorPolyKer{\power}$, and
$\LamPolyKer{\power}{\nu}$ are linear combinations of the functions defined by the representation  $\tau_{(\max\{d -2\power, 1\},
  1, \ldots, 1)}$.
\end{lemma}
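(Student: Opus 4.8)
The plan is to follow the template of Lemma~\ref{LemKendallStepI}, generalizing from a single factor of $\kendall$ to a polynomial of degree $\power$ in it. First I would observe that each of the three kernels, viewed as a right-invariant function $\sigma \mapsto k(\sigma)$, is by construction a polynomial in $\kendall(\sigma)$ of degree at most $\power$: for $\polyKer{\power}$ this is the binomial expansion of $(1 + \kendall)^\power$, and for $\NorPolyKer{\power}$ and $\LamPolyKer{\power}{\band}$ it is the analogous expansion with rescaled coefficients and a harmless multiplicative constant. Since the constant function is itself a function defined by the trivial representation $\tau_{(\numObj)}$, which is a summand of $\tau_{(\max\{\numObj - 2\power, 1\}, 1, \ldots, 1)}$, it therefore suffices to show that every power $\kendall^\ell$ with $0 \le \ell \le \power$ is a linear combination of the functions defined by $\tau_{(\max\{\numObj - 2\power, 1\}, 1, \ldots, 1)}$.

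Next I would reuse the elementary decomposition from Lemma~\ref{LemKendallStepI}. Writing $g_{ij}(\sigma) = 1 - 2\indi_{\{\sigma(i) > \sigma(j)\}}$ for $i < j$, we have $\kendall(\sigma) = {\numObj \choose 2}^{-1}\sum_{i<j} g_{ij}(\sigma)$, so $\kendall^\ell$ is a linear combination of products $g_{i_1 j_1} \cdots g_{i_\ell j_\ell}$. Because $g_{ij}^2 \equiv 1$, each such product collapses to a product of \emph{distinct} factors, and hence involves at most $2\ell \le 2\power$ distinct indices from $[\numObj]$. Expanding each surviving factor through $\indi_{\{\sigma(i)>\sigma(j)\}} = \sum_{l<r}\indi_{\{\sigma(i)=r,\sigma(j)=l\}}$ and multiplying out, every product of $g$'s becomes a linear combination of indicator functions $\indi_{\{\sigma(a_1)=c_1, \ldots, \sigma(a_m)=c_m\}}$ supported on a common set of $m \le 2\power$ distinct positions with distinct values (inconsistent products vanish). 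These are exactly the matrix-entry functions of the permutation representation on ordered $m$-tuples of distinct elements.

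To conclude, I would match these indicators with the target representation, splitting into the two regimes dictated by $\max\{\numObj - 2\power, 1\}$. When $\numObj \ge 2\power$ the partition is $(\numObj - 2\power, 1, \ldots, 1)$ with $2\power$ singleton parts, whose associated functions are precisely the indicators on ordered $2\power$-tuples of distinct elements; an indicator on $m < 2\power$ positions is recovered as a finite sum of these by marginalizing, i.e. fixing $2\power - m$ further positions (possible since $\numObj \ge 2\power$) and summing over all admissible distinct values on them. In the complementary regime $\numObj < 2\power$ the target partition collapses to $(1, \ldots, 1)$, so $\tau_{(1, \ldots, 1)}$ is the regular representation whose matrix-entry functions span all of $L^2(\simg)$, making the claim immediate for all three kernels.

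I expect the main obstacle to be bookkeeping rather than anything conceptual: the delicate points are tracking that the maximal number of \emph{distinct positions} in a surviving product is $2\power$ (crucially using the collapse $g_{ij}^2 = 1$ so that repeated factors do not inflate the index count), and verifying the marginalization identity that lifts a lower-order indicator on $m$ positions to a sum of top-order indicators on $2\power$ positions. One must also confirm that the scalar prefactors distinguishing the three kernels do not affect membership in the span, which reduces to the already-noted fact that the trivial representation is a summand of $\tau_{(\max\{\numObj - 2\power, 1\}, 1, \ldots, 1)}$.
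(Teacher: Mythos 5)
Your proposal is correct and follows essentially the same route as the paper's proof: expand the kernel as a polynomial in $\kendall$, write each resulting product of pairwise-comparison indicators as a function of at most $2\power$ values $\sigma(i_1),\ldots,\sigma(i_{2\power})$, and identify the spanning indicators $\indi_{\{\sigma(a_1)=c_1,\ldots\}}$ with the matrix-entry functions of $\tau_{(\max\{\numObj-2\power,1\},1,\ldots,1)}$. Your explicit marginalization step (lifting an indicator on $m<2\power$ positions to a sum of top-order ones) and your handling of the degenerate regime $\numObj<2\power$ via the regular representation simply make precise what the paper states more briefly.
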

\begin{proof}
We express the function $\sigma \mapsto \polyKer{\power}(\sigma)$ as a linear combination of the functions defined by the representation $\tau_{(\max\{\numObj - 2\power, 1\}, 1, \ldots, 1)}$. The same property can be proved for $\NorPolyKer{\power}$ and $\LamPolyKer{\power}{\band}$ analogously.

We first analyze the case $2\power < \numObj$. By definition, we have
\begin{align*}
\polyKer{\power}(\sigma) = \left( 1 +
\kernel_\tau(\sigma)\right)^\power = \left(2 - \frac{2}{{\numObj
    \choose 2}}\ii(\sigma)\right)^\power & = 2^\power \sum_{r =
  1}^\power (-1)^r {\power \choose r} \ii(\sigma)^r\\ &= 2^\power
\sum_{r = 1}^\power (-1)^r {\power \choose r} \left(\sum_{i < j}
\indi_{\{\sigma(i) > \sigma(j)\}}\right)^r,
\end{align*}
showing that the polynomial kernel $\polyKer{\power}$ is a linear
combination of products of functions $\indi_{\{\sigma(i) >
  \sigma(j)\}}$. The products of these functions contain at most
$\power$ terms, which means there are at most $2\power$ values
$\sigma(i_1)$, $\sigma(i_2)$, ..., $\sigma(i_{2\power})$ on which the
product function depends. But the indicator functions for events of
the form $\{ \sigma(i_1) = j_1, \ldots \sigma(i_{2\power}) =
j_{2\power} \}$ form a basis for all the functions that depend only on the
values $\sigma(i_1)$, $\sigma(i_2)$, $\ldots$ $\sigma(i_\power)$. The
conclusion follows for the case $2\power < \numObj$ because these
indicator functions are exactly the functions defined by the
representation $\tau_{(\max\{\numObj - 2\power, 1\}, 1, \ldots, 1)}$.

The case $2\power \geq \numObj$ follows analogously once we observe
that any product of $2\power$ indicator functions $\indi_{\{\sigma(i)
  > \sigma(j)\}}$ is determined by $\numObj - 1$ values $\{
\sigma(i_1), \ldots, \sigma(i_{\numObj - 1}) \}$.  (To be clear, this is because given $\numObj - 1$ such
values, the $\numObj^{th}$ value is fixed).
\end{proof}

Then, by the James submodule theorem together with the linear
independence of the functions defined by the irreps $\rho_\lambda$, we
find that the Fourier transforms of the three polynomial kernels are zero at
all irreps $\rho_\lambda$ with $\lambda \lhd (\max\{\numObj -
2\power, 1\}, 1, \ldots, 1)$. The first part of Theorem~\ref{ThmPoly} is now proved. 

To prove the second part of Theorem~\ref{ThmPoly} we make use of
feature maps of the three polynomial kernels. Up
to constants, the feature maps for the three kernels
$\polyKer{\power}$, $\NorPolyKer{\power}$, and
$\LamPolyKer{\power}{\nu}$ are the same. For simplicity, we work with
the kernel $\polyKer{\power}$. All the arguments presented here extend
to the other two polynomial kernels as well.

We now give a recursive construction of the feature maps
\mbox{$\EmbdPoly{\power} \colon\simg \rightarrow \RR^{\left(1 +
    {\numObj \choose 2}\right)^\power}$} that satisfy the relation
\mbox{$\polyKer{\power}(\sigma, \sigma') =
  \EmbdPoly{\power}(\sigma)^\top \EmbdPoly{\power}(\sigma')$.}  First,
we use the feature map of the Kendall kernel to construct
$\EmbdPoly{1}$; in particular, the map \mbox{$\EmbdPoly{1} \colon
  \simg \rightarrow \RR^{1 + {\numObj \choose 2}}$} is defined by
\begin{align*}
\EmbdPoly{1}(\sigma)_{t_0} \defn 1 \quad \text{ and } \quad
\EmbdPoly{1}(\sigma)_{t_r} \defn \sqrt{{\numObj \choose
    2}^{-1}}\left(2\indi_{\{\sigma(i_r) < \sigma(j_r)\}} -1 \right),
\end{align*}
where the coordinates are indexed by the unordered pair $t_{0} = \{-1,
0\}$ and the ${\numObj \choose 2}$ unordered pairs $t_r = \{i_r,
j_r\}$ with $i_r, j_r \in [d]$ and $i_r < j_r$. We denote the set of
these unordered pairs by
\begin{align}
\label{EqTunordered}
\twosets \coloneqq \left\{t_0, t_1, \ldots, t_{{\numObj \choose
    2}}\right\}.
\end{align}
The feature map $\EmbdPoly{1}$ clearly satisfies $\polyKer{1} (\sigma,
\sigma') = \Phi_1(\sigma)^\top \Phi_1(\sigma')$. Now we use the map
$\EmbdPoly{\power -1}$ to construct a feature map $\EmbdPoly{\power}$
for $\power \geq 1$. By definition,  we have
\begin{align*}
\polyKer{\power}(\sigma, \sigma') & = \left(1 + \kendall(\sigma,
\sigma')\right)\left(1 + \kendall(\sigma, \sigma')\right)^{\power - 1}
= \EmbdPoly{1}(\sigma)^\top \EmbdPoly{1}(\sigma')
\EmbdPoly{\power - 1}(\sigma')^\top \EmbdPoly{\power - 1}(\sigma) \\ 
& = \tr\left(\EmbdPoly{1}(\sigma)^\top \EmbdPoly{1} (\sigma')
\EmbdPoly{\power - 1}(\sigma')^\top \EmbdPoly{\power -
  1}(\sigma)\right)\\ 
& = \tr \left(\left(\EmbdPoly{1}(\sigma) \EmbdPoly{\power -
  1}(\sigma)^\top\right)^\top \EmbdPoly{1}(\sigma')\EmbdPoly{\power -
  1}(\sigma')^\top\right).
\end{align*}
Therefore, the polynomial kernel of degree $\power$ between $\sigma$
and $\sigma'$ is equal to the inner product of the matrices
$\Phi_{1}(\sigma)\Phi_{\power -1}(\sigma)^\top$ and
$\Phi_{1}(\sigma')\Phi_{\power -1}(\sigma')^\top$. By induction, we
see that $\EmbdPoly{\power}$ can be obtained from $\EmbdPoly{1}$ by
taking the outer product with itself $\power$ times, meaning that the
embedding $\Phi_\power: \simg \rightarrow \RR^{\left(1 + {\numObj
    \choose 2}\right)^\power}$ can be expressed as
\label{EqPolyMap}
\begin{align}
\EmbdPoly{\power}(\sigma)_{s_1 s_2 \ldots s_\power} = \prod_{i =
  1}^\power \EmbdPoly{1}(\sigma)_{s_i},
\end{align}
where $s_1, s_2, \ldots, s_\power$ is a sequence of elements of $\twosets$.

The following lemma is the key result that allows us to show that the
three polynomial kernels of degree greater or equal than $\numObj - 1$
are characteristic.

\begin{lemma}
\label{LemLinInd}
The vectors $\{\EmbdPoly{\numObj - 1}(\sigma) \, \mid \, \sigma \in
\simg \}$ are linearly independent.
\end{lemma}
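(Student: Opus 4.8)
The plan is to prove the contrapositive directly: I will show that no nontrivial combination $\sum_{\sigma \in \simg} c_\sigma \EmbdPoly{\numObj-1}(\sigma)$ can vanish, by recovering each coefficient $c_{\sigma_0}$ through pairing the putative relation against a carefully chosen test function on $\simg$. The one idea that makes the whole argument work is that the indicator of a \emph{single} permutation can be written as a product of only $\numObj-1$ pairwise-comparison functions, rather than all $\binom{\numObj}{2}$ of them, which is exactly what keeps everything at degree $\numObj-1$.

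First I would record what a linear relation says coordinate-by-coordinate. Write $\chi_{ab}(\sigma) := 2\indi_{\{\sigma(a)<\sigma(b)\}}-1$ for an unordered pair $\{a,b\} \in \twosets^*$, and $\chi_S := \prod_{\{a,b\}\in S}\chi_{ab}$ for a set $S$ of pairs. Because $\EmbdPoly{1}(\sigma)_{t_0}=1$, for any $S \subseteq \twosets^*$ with $|S|=\ell \le \numObj-1$ the coordinate of $\EmbdPoly{\numObj-1}(\sigma)$ indexed by the tuple consisting of the $\ell$ elements of $S$ padded out with $\numObj-1-\ell$ copies of $t_0$ equals $\binom{\numObj}{2}^{-\ell/2}\chi_S(\sigma)$. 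Hence the relation $\sum_\sigma c_\sigma \EmbdPoly{\numObj-1}(\sigma)=0$ forces
\begin{align}
\sum_{\sigma \in \simg} c_\sigma\, \chi_S(\sigma) = 0 \qquad \text{for every } S\subseteq \twosets^* \text{ with } |S|\le \numObj-1. \tag{$\star$}
\end{align}

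Second I would express point indicators at low degree. Fix $\sigma_0 \in \simg$ and set $a_k := \sigma_0^{-1}(k)$, so that $a_1,\ldots,a_\numObj$ lists the objects in increasing order of their $\sigma_0$-rank. Then $\sigma=\sigma_0$ holds if and only if $\sigma(a_1)<\sigma(a_2)<\cdots<\sigma(a_\numObj)$, since a strictly increasing chain of $\numObj$ values must equal $1,2,\ldots,\numObj$ in order; thus the conjunction of the $\numObj-1$ consecutive comparisons pins down $\sigma_0$ exactly, giving
\begin{align*}
\indi_{\{\sigma=\sigma_0\}} \;=\; \prod_{k=1}^{\numObj-1}\indi_{\{\sigma(a_k)<\sigma(a_{k+1})\}} \;=\; \prod_{k=1}^{\numObj-1}\frac{1 + \epsilon_k\,\chi_{\{a_k,a_{k+1}\}}}{2},
\end{align*}
where $\epsilon_k \in \{-1,+1\}$ accounts for whether $a_k<a_{k+1}$. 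Expanding the product shows $\indi_{\{\sigma=\sigma_0\}} = \sum_{S\subseteq E_0}\alpha_S\,\chi_S$ with $\alpha_S = \pm 2^{-(\numObj-1)}$, where $E_0 = \{\{a_k,a_{k+1}\} : 1\le k\le \numObj-1\}$ contains only $\numObj-1$ pairs, so every index set $S$ appearing satisfies $|S|\le \numObj-1$.

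Finally I would combine the two steps. Pairing the coefficient vector $c$ against this expansion gives $c_{\sigma_0} = \sum_{\sigma} c_\sigma\, \indi_{\{\sigma=\sigma_0\}}(\sigma) = \sum_{S\subseteq E_0}\alpha_S \sum_{\sigma} c_\sigma\, \chi_S(\sigma) = 0$, since each inner sum vanishes by $(\star)$ (as $|S|\le \numObj-1$ there). As $\sigma_0$ was arbitrary, we conclude $c\equiv 0$, so the vectors are linearly independent. The main obstacle is the middle step, namely recognizing that a single permutation is determined by a \emph{chain} of $\numObj-1$ consecutive-rank comparisons rather than by all pairwise comparisons; this is precisely what makes $\power=\numObj-1$ the exact threshold. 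The remaining steps are routine bookkeeping in the tensor-power feature-map coordinates.
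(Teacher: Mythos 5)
Your proof is correct, and it reaches the conclusion by a more direct route than the paper's. The shared core is identical: both arguments rest on the observation that a single permutation $\sigma_0$ is pinned down by the chain of $\numObj-1$ consecutive comparisons $\sigma(a_k)<\sigma(a_{k+1})$ with $a_k=\sigma_0^{-1}(k)$, and that the $(\numObj-1)$-fold tensor power of $\EmbdPoly{1}$ exposes (up to positive constants) every product $\chi_S=\prod_{\{a,b\}\in S}\chi_{ab}$ with $|S|\le \numObj-1$ as a coordinate, once the slack slots are filled with $t_0$. Where you diverge is the middle of the argument. The paper first upgrades the vanishing of the signed sums $\sum_\sigma \alpha(\sigma)\chi_S(\sigma)$ to the vanishing of the sums of $\alpha$ over every ``cell'' $\epsilon_1 s_1\cdots\epsilon_p s_p$ (intersections of half-sets for arbitrary sign patterns), by an induction on $p$ that runs a parity argument on the hypercube graph $\{-1,+1\}^{p+1}$; only then does it specialize to the cell that is the singleton $\{\sigma_0\}$. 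You instead expand the point indicator $\indi_{\{\sigma=\sigma_0\}}=\prod_k \tfrac{1+\epsilon_k\chi_{\{a_k,a_{k+1}\}}}{2}$ directly in the basis $\{\chi_S\}_{S\subseteq E_0}$ and pair it against the relation, so that $c_{\sigma_0}=\sum_{S\subseteq E_0}\alpha_S\sum_\sigma c_\sigma\chi_S(\sigma)=0$ follows in one line from your $(\star)$. This buys a cleaner and shorter argument: you only need the $2^{\numObj-1}$ subsets of the single chain $E_0$ for each $\sigma_0$, rather than all signed cells of all $(\numObj-1)$-tuples of pairs, and you avoid the graph-parity induction entirely. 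The paper's version does establish the stronger intermediate fact that \emph{every} cell sum vanishes, which is of some independent interest, but it is not needed for the lemma. Both proofs correctly identify $\power=\numObj-1$ as the threshold for the same reason.
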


Since it is more involved, we deffer this proof to Section~\ref{AppPolyStepII}; here we provide some
intuition for the argument.  By construction, each entry of
$\Phi_{\numObj - 1}$ is equal to a product of up to $\numObj - 1$
terms $2\indi_{\{\sigma(i) < \sigma(j)\}} - 1$ times a constant. The
key property that makes the result true is that the indicator
functions $\indi_{\{\sigma = \sigma_r\}}$ can be expressed as a product
of $\numObj - 1$ indicator functions $\indi_{\{\sigma(i) <
  \sigma(j)\}}$. For example, when $d = 3$, the product
$\indi_{\{\sigma(1) < \sigma(3)\}} \indi_{\{\sigma(3) < \sigma(2)\}}$
is equal to the indicator function of the permutation
$[1,3,2]$. Moreover, the degree $\numObj - 1$ is the smallest with
this property. 

As mentioned previously, a universal kernel on the symmetric group is
also characteristic. Hence, it suffices to show that the polynomial
kernel $\polyKer{\numObj - 1}$ is universal.  Therefore, it is enough
to check that the Gram matrix $M_\tau = [\polyKer{\numObj -
    1}(\sigma_i , \sigma_j)]$ is invertible, where $\sigma_1$,
$\sigma_2$, \ldots, $\sigma_{\numObj!}$ enumerate all the elements of
$\simg$. The Gram matrix can be written as
\begin{align*}
M_\tau = \begin{bmatrix} \EmbdPoly{\numObj -
    1}(\sigma_1)^\top\\ \vdots\\ \EmbdPoly{\numObj -
    1}(\sigma_{\numObj!})^\top
\end{bmatrix}
\begin{bmatrix}
\EmbdPoly{\numObj - 1}(\sigma_1) & \cdots & \EmbdPoly{\numObj -
  1}(\sigma_{\numObj!})
\end{bmatrix}
\end{align*}
because $\polyKer{\numObj - 1}(\sigma_i, \sigma_j) = \EmbdPoly{\numObj
  - 1}^\top(\sigma_i) \EmbdPoly{\numObj - 1}(\sigma_j)$. From
Lemma~\ref{LemLinInd}, we know that the vectors $\EmbdPoly{\numObj -
  1}$ are independent, and hence the Gram matrix $M_\tau$ is full
rank, which completes the proof of Theorem~\ref{ThmPoly}.


\subsection{Proof of Theorem~\ref{ThmMallows}}
\label{SecProofMallows}

By Bochner's theorem and the Fourier inversion theorem it suffices
to show that the Mallows kernel is characteristic or universal. 

We first give a direct proof that the Mallows kernel is 
universal. 
Theorem~\ref{ThmPoly} shows that the $\nu$-normalized
polynomial kernel $\LamPolyKer{\power}{\band}$ defined by
\begin{align*}
\LamPolyKer{\power}{\band}(\sigma, \sigma') =
e^{-\frac{\band}{2}{\numObj \choose 2}} \left(1 + \band{\numObj
  \choose 2}\frac{\kendall(\sigma,
  \sigma')}{2\power}\right)^\power
\end{align*}
is characteristic and universal when the degree $\power$ is greater or equal than
$\numObj - 1$. Moreover, we saw that as the degree $\power$ increases
to infinity, the kernel $\LamPolyKer{\power}{\band}$ converges to the
Mallows kernel $\mallows$. Therefore, it is not surprising that
the Mallows kernel is universal since it is the limit of
universal kernels.

Let us now make this rough argument precise.  We need to show that the 
Gram matrix $M_m = [\mallows(\sigma_i,
  \sigma_j)]$ is strictly positive definite; here the permutations
$\sigma_1$, $\sigma_2$, \ldots, $\sigma_{\numObj!}$ enumerate the
elements of $\simg$.

Recall that the Hadamard product between two matrices $A$ and $B$ of
the same dimensions, denoted by $A\circ B$, is formed by taking
elementwise-product of the entries; we use $A^{\circ \power}$ to
denote the Hadamard product of the matrix $A$ with itself $\power$
times. By Schur's theorem, the Hadamard product $A\circ B$ of any two
PSD matrices is also PSD. Let $M_\tau = \frac{\band}{2} {\numObj
  \choose 2}[\kendall(\sigma_i, \sigma_j)]$. Performing a Taylor
series expansion of the exponential function yields
\begin{align*}
e^{\frac{\nu}{2}{\numObj \choose 2}}M_m &= e^{\frac{\nu}{2}{\numObj
    \choose 2}}[\mallows(\sigma_i, \sigma_j)] =
e^{\frac{\nu}{2}{\numObj \choose 2}}[e^{-\band n_d(\sigma_i,
    \sigma_j)}] = \sum_{i = 0}^\infty \frac{1}{i!}M_\tau^{\circ i},
\end{align*}
where the series on the right hand side is entry-wise absolutely
convergent. For some $0\leq \alpha_i \leq 1$, re-arranging terms yields
\begin{align}
e^{\frac{\nu}{2}{\numObj \choose 2}}M_m &= \sum_{i = 0}^{\numObj -
  1}{\numObj - 1 \choose i}\frac{1}{(\numObj - 1)^i}M_\tau^{\circ i} +
\sum_{i = 0}^\infty \alpha_i \frac{1}{i!} M_\tau^{\circ i} \nonumber
\\
\label{EqMallowsExpansion}
& = \left(1 + \frac{M_\tau}{\numObj - 1}\right)^{\circ (\numObj - 1)}
+ \sum_{i = 0}^\infty \alpha_i \frac{1}{i!} M_\tau^{\circ i}.
\end{align}
The first term in the right hand side of~\eqref{EqMallowsExpansion} is
the Gram matrix of the $\nu$-normalized polynomial kernel of degree
$\numObj - 1$, and thus it is a strictly positive definite matrix. The
second term is a positive semi-definite matrix because of Schur's
theorem. Hence $M_m$ is strictly positive definite and
Theorem~\ref{ThmMallows} is now proved.

For completeness, we show that the Mallows kernel is characteristic 
in two other ways. First of all, because of the feature embedding
 of the Kendall kernel, it can be viewed as the standard 
Gaussian kernel on $\RR^{\numObj \choose 2}$ restricted to $2^{\numObj \choose 2}$.
Then, since the Gaussian kernel is characteristic, the Mallows
kernel has to be characteristic. 

As yet another proof, we note that the result of
Theorem~\ref{ThmMallows} can be obtained via a more abstract argument,
using the results of~\cite{christmann2010universal}.  Given a compact
metric space $X$ and a separable Hilbert space $\mathcal{H}$, let
$\Psi\colon X \rightarrow \mathcal{H}$ a continuous and injective
map. The authors show that the kernel $\kernel$ on $X \times X$
given by
\begin{align}
\label{EqChrisKer}
\kernel(x,y) = e^{-\nu \|\Psi(x) - \Psi(y)\|_\mathcal{H}^2}
\end{align}
is universal. The symmetric group is a compact metric space and we can choose
$\Psi = \Phi$, the feature map of the Kendall kernel. We can thus
conclude that the kernel defined in equation~\eqref{EqChrisKer} is
universal and characteristic; since it equals Mallows' kernel
up to constants, the claim of Theorem~\ref{ThmMallows} follows.



\section{Background in Representation Theory}
\label{SecAppRep}

In this section, we present further notions and results about the
representation theory for the symmetric group.  Our exposition is
brief and covers only the essential results needed in our work. For a
more detailed introduction good resources include the thesis
of~\cite{kondor2008group} and the appendices
by~\cite{huang2009fourier}, with a concise summary also given
by~\cite{kondor2010ranking}.  More detailed presentations can be found
in~\cite{diaconis1988group},~\cite{sagan2013symmetric},
or~\cite{fulton1991representation}, ordered according to increasing
levels of abstraction.


\subsubsection*{Groups}

A group $(G, \cdot)$ is a set $G$ endowed with a multiplicative
operation $\cdot\colon G\times G \rightarrow G$ such that
\begin{enumerate}[leftmargin=*]
\item[(a)] there exists an element $e\in G$ called the identity
  element such that $e\cdot g = g \cdot e = g$ for all $g\in G$.
\item[(b)] $g_1\cdot (g_2 \cdot g_3) = (g_1 \cdot g_2) \cdot g_3$ for
  all $g_1, g_2, g_3 \in G$.
\item[(c)] for any element $g\in G$, there exists $g^{-1}\in G$ such
  that $g\cdot g^{-1} = g^{-1}\cdot g = e$.
\end{enumerate}

It is easy to check that $(\RR, +)$ or $(\RR, \cdot)$ are examples of
groups. It is also straightforward to check that the set of
permutations together with the operation of composition form a group,
called the symmetric group.  Notice that we do not require $g_1\cdot
g_2 = g_2 \cdot g_1$. A group with this property is called
\emph{commutative} or \emph{abelian}. Abelian groups are easier to
study than non-abelian ones. Unfortunately, the symmetric group is not
abelian.


\subsubsection*{Equivalent Representations}

Two representations $\rho_1$ and $\rho_2$ are \textbf{equivalent} if
they have the same dimension and if there exits an invertible matrix
$C$ such that $\rho_1(\sigma) = C^{-1} \rho_2(\sigma) C$ for all
$\sigma \in \simg$. In other words, two representations are equivalent
if there exists a change of basis that makes one of them equal to the
other.  We use $\rho_1 \equiv \rho_2$ to denote the equivalence of the
representations $\rho_1$ and $\rho_2$.

For any representation $\rho_1$, there exists an equivalent
representation $\rho_2$ such that each matrix $\rho_2(\sigma)$ is
unitary (i.e. $\rho_2(\sigma)^* \coloneqq
\overline{\rho_2(\sigma)}^\top = \rho_2(\sigma)^{-1} =
\rho_2(\sigma^{-1})$). Therefore, we can always assume that the
representations we are working with are unitary.

Furthermore, in the case of the irreps of the symmetric group, there
exist bases such that each representation $\rho_\lambda$ is real, and
hence orthogonal. The irreps in these bases are known as Young's
orthogonal representations, and throughout this paper we work with
these forms of $\rho_\lambda$.


\subsubsection*{Irreps}

We already said that an irreducible representation is a representation
that is not equivalent to a direct sum of representations.  The
symmetric group, in fact any finite group, has a finite number of
pairwise inequivalent irreps. Let us consider a maximal set of
pairwise inequivalent irreps. There can be multiple such sets, but
they are the same up to equivalence.  To be more precise, between two
maximal sets of irreps there exists a bijection such that an irrep in
the first set is mapped to an equivalent irrep in the other set.

A fundamental result in representation theory states than any
representation is equivalent to a direct sum of irreps. That is, each
representation $\rho$ can be decomposed into the direct of sum of some
irreducible representations $\rho_1$, $\rho_2$, \ldots, $\rho_{k}$
with some multiplicities $m_1$, $m_2$, \ldots, $m_k$:
\begin{align*}
\rho \equiv \bigoplus_{i = 1}^k \bigoplus_{j = 1}^{m_i} \rho_i.
\end{align*}
Let us recall that the entries of each representation $\rho \colon
\simg \rightarrow \CC^{d_\rho \times d_\rho}$ define $d_\rho^2$
functions $\sigma \mapsto \rho(\sigma)_{ij}$ on the symmetric
group. The functions defined by Young's orthogonal representations
form a basis for the space of functions $f\colon \simg \rightarrow
\CC$. This result is important and this work exploits it extensively.


\subsubsection*{The Fourier Transform}

We saw that the \textbf{Fourier transform} of a function $f \colon
\simg \rightarrow \CC$ is a map from representations to matrices, and
it is given by
\begin{align*}
\widehat f(\rho) = \sum_{\sigma \in \simg} f(\sigma) \rho(\sigma),
\end{align*}
where $\rho$ is a representation of the symmetric group.

This Fourier transform has properties similar to those of its
counterpart over the real numbers. First of all, there exists a
\textbf{Fourier inversion formula} and it takes the form
\begin{align*}
f(\sigma) = \frac{1}{\numObj !}\sum_{\lambda \vdash \numObj} d_\lambda
\tr\left(\rho_\lambda(\sigma^{-1})\widehat f(\rho_\lambda)\right).
\end{align*}
The Fourier transform on the symmetric also satisfies the
\textbf{Plancherel formula}:
\begin{align*}
 \sum_{\sigma \in \simg} f(\sigma^{-1})g(\sigma) =
 \frac{1}{\numObj!}\sum_{\lambda \vdash \numObj} d_\lambda
 \tr\left(\hat f(\rho_\lambda) \hat g(\rho_\lambda)\right).
\end{align*}

A third familiar property is that the Fourier transform of the
convolution of two functions is the product of the Fourier transforms
of the individual functions.  The \textbf{convolution} of two
functions $f,g:\simg \rightarrow \CC$ is defined by $f\ast g(\pi) =
\sum_{\sigma\in \simg} f(\pi \sigma^{-1})g(\sigma)$.


\subsubsection*{Ferrer diagrams, Young tableaux, and Young tabloids}

As mentioned in Section~\ref{SecKendallMallowsFreq}, it is natural to index the
irreps of $\simg$ by partitions $\lambda$ of $d$. The exact
correspondence is not easy to describe, but it is useful to understand
how to visualize the partitions $\lambda$ and the corresponding irrep
$\rho_\lambda$.

The partitions $\lambda \vdash d$ are represented graphically in the
form of \textbf{Ferrer's diagrams}. The diagram of a partition
$\lambda = (\lambda_1, \ldots, \lambda_r)$ is formed by boxes placed
in rows such that row $i$ contains $\lambda_i$ boxes. For example, the
partitions of $4$ are $(4)$, $(3,1)$, $(2,2)$, $(2,1,1)$, and
$(1,1,1,1)$, represented as:
\begin{align*}
\begin{ytableau}
\  &\  & \ & \ 
\end{ytableau}
\quad \quad 
\begin{ytableau}
\  &\  & \ \\
\ 
\end{ytableau}
\quad \quad 
\begin{ytableau}
\ &\ \\ \ &\
\end{ytableau}
\quad \quad
\begin{ytableau}
\ &\ \\ \ \\ \
\end{ytableau}
\quad \quad 
\begin{ytableau}
\ \\ \ \\ \ \\ \
\end{ytableau}
\end{align*}

In this graphical representation, a wider partition is higher in the
partial ordering, while a taller partition is lower in the partial
ordering.

A Ferrer diagram with the elements of the set $\{1,2,\ldots, d\}$ in
its boxes is called a \textbf{Young tableau}. Young tableaux in which
the rows are viewed as sets are called \textbf{Young tabloids}. To
emphasize that the rows of a Young tabloid are not ordered we drop the
vertical lines in the graphical representation. For example, the Young
tabloids of the partition $(2,1)$ are \ytableausetup{centertableaux,
  tabloids}
\begin{align*}
\begin{ytableau}
1 & 2 \\ 3
\end{ytableau}
\quad \quad
\begin{ytableau}
1 & 3 \\ 2
\end{ytableau}
\quad \quad 
\begin{ytableau}
2 & 3 \\ 1
\end{ytableau}
\end{align*}
In what follows, we adopt the shorthand notation $\sigma(\{1,3\})
\defn \{\sigma(1), \sigma(3)\}$. When we are interested in the subset
of permutations $\sigma \sim P$ that satisfy $\sigma(\{1,3\}) =
\{2,5\}$, $\sigma(\{2,4\}) = \{1,4\}$ and $\sigma(\{5\}) = \{3\}$, we
express this as the permutations that satisfy
\begin{align}
\sigma \left(
\begin{ytableau}
1 & 3 \\ 2 & 4\\ 5
\end{ytableau} 
\right) = 
\begin{ytableau}
2 & 5 \\ 1 & 4\\ 3
\end{ytableau}
\label{eq:tableaux_prob}
\end{align}


\subsubsection*{Overcomplete Representations and James' Submodule Theorem}

In studying irreps or the Fourier transforms of functions it is often
useful to consider reducible representations that have an easy to
understand interpretation and contain copies of the irreps. We have
seen in Section~\ref{SecProofKendall} that the representations
$\tau_\lambda$ play such a role. We now define these representations
for a general partition $\lambda$.

Let $\{t_1\}$, $\{t_2\}$, ..., $\{t_l\}$ be an enumeration of all Young tabloids\footnote{It is standard to use $\{t\}$ to denote a Young tabloids and $t$ to denote a Young tableaux because the former are equivalence classes of the latter.} of some partition $\lambda \vdash d$. The representation $\tau_\lambda$ takes values in $\RR^{l\times l}$ and is defined by
\begin{equation}
\label{EqOverCompDefn}
[\tau_\lambda(\sigma)]_{ij} = \left\{
\begin{array}{cc}
1 & \text{if } \sigma(\{t_i\}) = \{t_j\}\\
0&\text{otherwise}
\end{array}
\right.
\end{equation}

We note that the Fourier transform of a probability measure $\probP$ at the representation $\tau_\lambda$ encodes marginal probabilities:
\begin{align*}
[\widehat \probP(\tau_\lambda)]_{ij} = \sum_{\sigma \in \simg} \probP(\sigma)[\tau_\lambda(\sigma)]_{ij} = P(\sigma(\{t_i\}) = \{t_j\}).
\end{align*}

Therefore the Fourier transform at this representation has a concrete interpretation in ``time domain''. Nonetheless, because of the Fourier inversion formula we want to understand the properties of the kernel functions at irreps. James' Submodule Theorem give a decomposition of $\tau_\lambda$ into irreps. We state the form of the theorem presented by~\cite{huang2009fourier}.

\begin{theorem*}{\normalfont \textbf{[James' Submodule Theorem]}}
There exist orthogonal matrices $C_\lambda$ and integers $K_{\lambda \mu} \geq 0$ so that
\begin{equation}
    C_\lambda^\top \tau_\lambda(\sigma) C_\lambda = \bigoplus_{\mu \unrhd \lambda} \bigoplus_{l = 1}^{K_{\lambda \mu}}\rho_\mu(\sigma) \text{, for all } \sigma \in \simg. 
\end{equation}
Furthermore, $K_{\lambda\lambda} = 1$ for all $\lambda \vdash d$. 
\end{theorem*}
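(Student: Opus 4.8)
The plan is to identify the overcomplete representation $\tau_\lambda$ with the permutation module $M^\lambda$ on the Young tabloids of shape $\lambda$, and to decompose it using the Specht-module description of the irreducibles of $\simg$. The existence of an orthogonal $C_\lambda$ and of nonnegative integer multiplicities $K_{\lambda\mu}$ is the easy half: the tabloids form an orthonormal basis of $M^\lambda$ in which each $\tau_\lambda(\sigma)$ is a permutation matrix, hence orthogonal, so the standard inner product is $\simg$-invariant and every submodule has a complementary submodule, namely its orthogonal complement. Iterating this splitting expresses $M^\lambda$ as an orthogonal direct sum of irreducibles; collecting the resulting orthonormal basis vectors into $C_\lambda$ makes it orthogonal, and writing each irrep in its Young orthogonal form $\rho_\mu$ yields the block decomposition. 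The two substantive claims are then that only $\mu \unrhd \lambda$ can occur, and that $\mu = \lambda$ occurs exactly once.

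For these I would introduce, for each $\mu \vdash d$, the Specht module $S^\mu \subseteq M^\mu$, the span of the polytabloids $e_t = \kappa_t\{t\}$, where $t$ ranges over $\mu$-tableaux, $C_t$ is the column stabilizer of $t$, and $\kappa_t = \sum_{\pi \in C_t}\sign(\pi)\,\pi$ is the column antisymmetrizer. I would invoke the classical fact that $\{S^\mu : \mu \vdash d\}$ is a complete list of pairwise inequivalent irreducibles, so that $\rho_\mu \cong S^\mu$ and, by Schur's lemma and semisimplicity, the multiplicity $K_{\lambda\mu}$ equals $\dim\mathrm{Hom}_{\simg}(S^\mu, M^\lambda)$. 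Both remaining claims thus reduce to understanding these homomorphism spaces.

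The crux is a dominance lemma: for a $\mu$-tableau $t$ and a $\lambda$-tabloid $\{s\}$, if $\kappa_t\{s\} \neq 0$ then $\mu \unrhd \lambda$, and if moreover $\mu = \lambda$ then $\kappa_t\{s\} = \pm e_t$. To prove it I would observe that if two entries lying in a common row of $\{s\}$ also lie in a common column of $t$, then the transposition swapping them lies in $C_t$ and fixes $\{s\}$, forcing $\kappa_t\{s\} = 0$; so in the surviving case the entries of each row of $\{s\}$ occupy distinct columns of $t$. A counting (Hall-type) argument comparing how many entries of the first $j$ rows of $\{s\}$ can be distributed among the columns of $t$ then yields the partial-sum inequalities $\sum_{i \le j}\lambda_i \le \sum_{i \le j}\mu_i$, i.e.\ $\mu \unrhd \lambda$, and pins down the equality case. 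Applying this to the generators $e_t$ of $S^\mu$ shows that any nonzero element of $\mathrm{Hom}_{\simg}(S^\mu, M^\lambda)$ forces $\mu \unrhd \lambda$, giving the constraint $K_{\lambda\mu} = 0$ unless $\mu \unrhd \lambda$.

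Finally, $K_{\lambda\lambda} = 1$ follows by combining the two halves. On one hand $S^\lambda \subseteq M^\lambda$ by construction, so $K_{\lambda\lambda} \ge 1$. On the other, the equality case $\mu = \lambda$ of the lemma shows that $\kappa_t$ collapses all of $M^\lambda$ onto the single line $\CC\,e_t$; since $S^\lambda$ is the cyclic module generated by $e_t = \kappa_t\{t\}$, this is exactly what is needed to bound $\dim\mathrm{Hom}_{\simg}(S^\lambda, M^\lambda) \le 1$, whence $K_{\lambda\lambda} = 1$ by Schur's lemma. I expect the dominance lemma to be the main obstacle: complete reducibility and the Schur-lemma bookkeeping are routine, but the combinatorial argument controlling which columns of $t$ the rows of a tabloid can occupy, and converting that into the dominance inequalities, is where the real work lies.
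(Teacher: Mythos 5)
The paper does not actually prove this statement: it is imported verbatim from the literature (the form given by Huang et al.) and used as a black box, so there is no in-paper argument to compare against. Your proposal reconstructs the classical Specht-module proof, and it is essentially correct. Identifying $\tau_\lambda$ with the permutation module $M^\lambda$ on $\lambda$-tabloids, using the $\simg$-invariance of the inner product in which the tabloids are orthonormal to split $M^\lambda$ orthogonally into irreducibles (which yields the orthogonal $C_\lambda$ and the nonnegative multiplicities), reducing $K_{\lambda\mu}$ to $\dim\mathrm{Hom}_{\simg}(S^\mu, M^\lambda)$ via Schur's lemma, and controlling these spaces with the dominance lemma for $\kappa_t\{s\}$ is exactly the standard argument found in Sagan or James. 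The combinatorial core is stated correctly: the coset decomposition of the column stabilizer by $\{1,(a\,b)\}$ kills $\kappa_t\{s\}$ when two entries share a row of $\{s\}$ and a column of $t$, and the Hall-type count in the surviving case gives $\sum_{i\le j}\lambda_i \le \sum_{i\le j}\mu_i$, with the equality case forcing $\{s\}$ to be a column-permutation of $\{t\}$ and hence $\kappa_t\{s\} = \pm e_t$.

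One step deserves to be made explicit. To apply the dominance lemma you evaluate $\theta(e_t) = \theta(\kappa_t\{t\}) = \kappa_t\,\theta(\{t\})$ for $\theta \in \mathrm{Hom}_{\simg}(S^\mu, M^\lambda)$, but $\{t\}$ lies in $M^\mu$ and not in $S^\mu$, so $\theta(\{t\})$ is undefined until you first extend $\theta$ to all of $M^\mu$ (for instance by zero on the orthogonal complement of $S^\mu$ in $M^\mu$, which is a complementary submodule by the same invariant-inner-product argument you already invoked). The same extension is needed in the $K_{\lambda\lambda} \le 1$ step before you may conclude $\theta(e_t) \in \kappa_t M^\lambda \subseteq \CC\, e_t$ and hence that $\theta$ is a scalar multiple of the inclusion on the cyclic module $S^\lambda$. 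Over $\CC$ this extension is free by semisimplicity, so it is not a genuine obstruction, but it is the one move your sketch skips.
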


The integers $K_{\lambda, \mu}$ are known as Kostka's numbers and there are methods to compute them. For example, we have already mentioned in Section~\ref{SecProofKendall} that 
\begin{align}
\label{EqTauI}
\tau_{(n)} &\equiv \rho_{(n)}\\
\label{EqTauII}
\tau_{(n-1,1)} &\equiv \rho_{(n)} \oplus \rho_{(n-1,1)}\\
\label{EqTauIII}
\tau_{(n-2,2)} &\equiv \rho_{(n)} \oplus \rho_{(n-1,1)} \oplus \rho_{(n-2,2)}\\
\label{EqTauIV}
\tau_{(n-2,1,1)} &\equiv \rho_{(n)} \oplus \rho_{(n-1,1)} \oplus \rho_{(n-1,1)} \oplus \rho_{(n-2,2)} \oplus \rho_{(n-2,1,1)}.
\end{align}


\section{Miscellaneous Proofs}
\label{SecAppProofs}

In this appendix, we collect the proofs of various other results.
 
\subsection{Proof of Lemma~\ref{LemLinInd}}
\label{AppPolyStepII}

Recall from equation~\eqref{EqTunordered} that for $i_r,j_r \in [d]$
with $i_r < j_r$, we use $t_r = \{i_r,j_r\}$ to denote unordered pairs
with an additional $t_0 = \{-1,0\}$ for convenience, and $\twosets$ to
denote the set of all such ${d \choose 2} + 1$ unordered pairs. In
equation~\eqref{EqPolyMap}, the definition of the feature map
$\EmbdPoly{\numObj - 1} : \simg \to \RR^{\left({d \choose 2} +
  1\right)^{d-1}}$ implies that
\begin{align*}
 \forall s_1, s_2, \ldots s_{d-1} \in \twosets, ~ ~ \EmbdPoly{\numObj
   - 1}(\sigma)_{s_1 s_2 \ldots s_{\numObj-1}} = C_{s_1 s_2 \ldots
   s_{\numObj -1}} \prod_{s_r \neq t_0} (2\indi_{\{\sigma(i_r) <
   \sigma(j_r)\}} - 1)
\end{align*}
where the product is only over $s_r \neq t_0$ since
$\EmbdPoly{1}(\sigma)_{t_0} = 1$, and $C_{s_1 s_2 \ldots s_{\numObj
    -1}} = {d \choose 2}^{|\{r: s_r \neq t_0 \}|/2}$ is a positive
constant independent of $\sigma$. We use the convention that an empty
product evaluates to 1.

Now define a new feature map $\EmbdNor{\numObj - 1}~:~
\simg~\to~\RR^{\left({d \choose 2} + 1\right)^{d-1}}$ as
 \begin{align*}
 \forall \sigma \in \simg, ~ \forall s_1, s_2, \ldots s_{d-1} \in
 \twosets~, ~ ~ \EmbdNor{\numObj - 1}(\sigma)_{s_1 s_2 \ldots
   s_{\numObj-1}} = \prod_{s_r \neq t_0} (2\indi_{\{\sigma(i_r) <
   \sigma(j_r)\}} - 1)
 \end{align*}
Let $C$ represent an invertible diagonal matrix of the constants
$C_{s_1 s_2 \ldots s_{\numObj-1}}$.  Then note that
\begin{align*}
\forall \sigma \in \simg~, ~ \EmbdPoly{\numObj-1}(\sigma) = C
\EmbdNor{\numObj-1}(\sigma).
\end{align*}
Consequently, the vectors $\{\EmbdPoly{\numObj - 1}(\sigma)\}_{\sigma
  \in \simg}$ are linearly independent if and only if
$\{\EmbdNor{\numObj - 1}(\sigma)\}_{\sigma \in \simg}$ are linearly
independent. We work with $\{\EmbdNor{\numObj - 1}(\sigma)\}_{\sigma
  \in \simg}$ because its entries are always $\pm1$.

\emph{\paragraph{Claim.} If $\{\alpha(\sigma)\}_{\sigma \in \simg}$
  are $d!$ real coefficients such that
\begin{align}
\label{EqIndExpansion}
\sum_{\sigma \in \simg}\alpha(\sigma) \overline\Phi_{\numObj -
  1}(\sigma) = \mathbf{0} ~\in~ \RR^{\left({d \choose 2} +
  1\right)^{d-1}},
\end{align}
then each coefficient $\alpha(\sigma)$ is equal to
zero. \\
}

In what follows, we drop repeated occurrences of $t_0$ when indexing
the coordinates of $\EmbdNor{\numObj - 1}$ without risking
confusion. For example, $\EmbdNor{\numObj - 1}(\sigma)_{t_2}$ means
$\EmbdNor{\numObj - 1}(\sigma)_{t_2 t_0 \ldots t_0}$, where $t_0$ is
repeated $\numObj - 2$ times. Now observe that $\EmbdNor{\numObj -
  1}(\sigma)_{t_0} = 1$ for all $\sigma$, implying that 
\begin{subequations}
\begin{align}
\label{EqnGobiOne}
\sum_\sigma
\alpha (\sigma) &= 0.
\end{align}
By construction, we have $\EmbdNor{\numObj - 1}(\sigma)_{t_r} =
2\indi_{\{\sigma(i_r) < \sigma(j_r)\}} - 1$, and hence
\begin{align}
\label{EqnGobiTwo}
\sum_{\{\sigma(i) < \sigma(j)\}} \alpha(\sigma) - \sum_{\{\sigma(i) >
  \sigma(j)\}} \alpha(\sigma) = 0 \qquad \mbox{for all sets $s = \{i,
  j\}\in \twosets$.}
\end{align}
\end{subequations}
Equations~\eqref{EqnGobiOne} and~\eqref{EqnGobiTwo} imply that
\begin{align}
\label{EqBaseCase}
\sum_{\{\sigma(i) < \sigma(j)\}} \alpha(\sigma) = 0 \;\text{ and }\;
\sum_{\{\sigma(i) > \sigma(j)\}} \alpha(\sigma) = 0.
\end{align}
From now on, for a given unordered pair $s = \{i, j\}\in \twosets$, we
introduce the shorthand notation $+s \defn \{\sigma \colon \sigma(i) <
\sigma(j)\}$, with $-s$ denoting its complement.  Moreover, we write
several such signed unordered pairs next to each other we mean the
intersection of the two sets. For example $+s_1-s_2$ means $s_1 \cap
s_2^c$.

We use induction to show that each $\alpha(\sigma)$ is zero.  Assume
that for some fixed integer $\power$ and for all choices of $\power$
unordered pairs $s_{1}, \ldots, s_{\power} \in \twosets$ and for all
possible binary signs $\epsilon_{1}, \ldots, \epsilon_{\power} \in \{
+1, -1\}$ the following holds:
\begin{align*}
\sum_{\epsilon_{1}s_{1} \ldots
  \epsilon_{\power}s_{\power}}\alpha(\sigma) = 0.
\end{align*}
We show that this property holds for all choices of $\power + 1$
unordered pairs and binary signs. The base case $p = 1$ has been shown in Equation~\eqref{EqBaseCase}.

Fix a sequence of $\power + 1$ distinct pairs $s_{1}$, $s_{2}$,
\ldots, and $s_{\power}$, all distinct from $t_0$. Then, each sequence
$\epsilon_1s_{1}$, $\epsilon_2s_{2}$, \ldots, $\epsilon_\power
s_{\power}$ can be encoded with a vector in $\{-1, +1\}^{\power +
  1}$. For a given sign vector $\epsilon$ in $\{-1, +1\}^{\power + 1}$
let $\sign(\epsilon)$ be equal to the product of the entries of
$\epsilon$. Therefore, $\sign(\epsilon)$ is $+1$ if the vector
$\epsilon$ contains an even number of $-1$ entries, and is $-1$
otherwise. Then, we have
\begin{align}
\label{EqInduction}
\sum_{\sigma \in \simg} \alpha(\sigma) \overline\Phi_{\numObj -
  1}(\sigma_i)_{s_1\ldots s_{\power + 1}} = 0 ~\Longrightarrow
\sum_{\epsilon \in \{-1, + 1\}^{\power + 1}}
\sign(\epsilon)\sum_{\epsilon_1 s_{1}\ldots \epsilon_{\power +
    1}s_{\power + 1}} \alpha(\sigma) = 0.
\end{align}

The signed pairs $-s_{1}$ and $+s_{1}$ are complements of each
other. Therefore, we have
\begin{align*}
\sum_{-s_{1} \epsilon_2 s_{2}\ldots \epsilon_{\power + 1}s_{\power +
    1}} \alpha(\sigma) + \sum_{+s_{1} \epsilon_2 s_{2} \ldots
  \epsilon_{\power + 1}s_{\power + 1}} \alpha(\sigma) =
\sum_{\epsilon_2 s_{2}\ldots \epsilon_{\power + 1}s_{\power + 1}}
\alpha(\sigma).
\end{align*}
This property holds for all pairs $s_{j}$, not just for
$s_{1}$. Furthermore, by the induction step we know that the right
hand side of the above equation equals zero. More generally, if $\epsilon$ and $\xi$
are two sign vectors in $\{ - 1, + 1\}^{\power + 1}$ that differ only
in a coordinate, we have
\begin{align}
\label{EqDistanceOne}
\underbrace{\sum_{\epsilon_1s_{1} \epsilon_2 s_{2}\ldots
    \epsilon_{\power + 1}s_{\power + 1}} \alpha(\sigma)}_{f(\epsilon)}
+ \underbrace{\sum_{\xi_1s_{1} \xi_2 s_{2}\ldots \xi_{\power +
      1}s_{\power + 1}} \alpha(\sigma)}_{f(\xi)} = 0.
\end{align}
Let $G$ be the standard graph on the hypercube $\{ - 1,
+ 1\}^{\power + 1}$, i.e. the graph with node set equal to $\{ - 1, +
1\}^{\power + 1}$ that connects to nodes by an edge only if they
differ in a single coordinate. Then, observation~\eqref{EqDistanceOne} 
immediately implies that for any two sign vectors  $\epsilon$ and $\xi$ at distance
two of each other in the graph $G$, we have
$f(\epsilon) = f(\xi)$.
In fact, it is straightforward that any pair of nodes
$\epsilon$ and $\xi$ that are at an even distance apart satisfy
$f(\epsilon) = f(\xi)$. 

It is easily checked that two nodes
$\epsilon$ and $\xi$ are at an even distance away only if
$\sign(\epsilon) = \sign(\xi)$. Therefore, if $\sign(\epsilon) =
\sign(\xi)$, then $f(\epsilon) = f(\xi)$. Moreover,
equation~\eqref{EqInduction} implies that
\begin{align*}
\sum_{\epsilon \colon \sign(\epsilon) = 1} f(\epsilon) -
\sum_{\epsilon \colon \sign(\epsilon) = -1} f(\epsilon) = 0.
\end{align*}
We also know that $\sum_{\epsilon\in \{-1,+1\}^{\power + 1}}
f(\epsilon) = 0$ because $\sum_{\sigma \in \simg} \alpha(\sigma) =
0$. Then, $\sum_{\sign(\epsilon) = 1} f(\epsilon)~=~0$ and
$\sum_{\sign(\epsilon) = -1} f(\epsilon)~=~0$. But the terms inside
each of these sums are equal to each other, hence $f(\epsilon) = 0$
for all $\epsilon\in \{-1, + 1\}^{\power + 1}$. This completes the
induction step.

Finally, because for any permutation $\sigma$ there
exists a sequence of $\numObj - 1$ unordered pairs $\indi_{\{\sigma(i) <
  \sigma(j)\}}$ that uniquely determine it, for each permutation
$\sigma$ we can choose sets $\epsilon_1 s_{1}$, \ldots,
$\epsilon_{\numObj - 1} s_{\numObj - 1}$ such that $\sigma$ is the
only permutation that is contained in all of them. Then, by what have
proven so far, we find $\alpha(\sigma) = 0$ for all $\sigma \in \simg$
and the conclusion follows.

\subsection{Proving that $n_\numObj (\sigma, \sigma')$ is right invariant}
\label{AppProofPropRightInv}

We need to check that 
$n_d(\sigma, \sigma') = n_d( \sigma\circ \pi,  \sigma'\circ \pi)$
for all $\pi \in \simg$.  By definition, we have
\begin{align*}
&\sum_{i< j}\left[\indi_{\{\sigma(i)<
      \sigma(j)\}}\indi_{\{\sigma'(i)>\sigma'(j)\}} +
    \indi_{\{\sigma(i)>
      \sigma(j)\}}\indi_{\{\sigma'(i)<\sigma'(j)\}}\right] =
  \\ &=\sum_{i< j}\left[\indi_{\sigma(\pi(i))<
      \sigma(\pi(j))\}}\indi_{\{\sigma'(\pi(i)))>\sigma'(\pi(j))\}} +
    \indi_{\{\sigma(\pi(i))>
      \sigma(\pi(j))\}}\indi_{\{\sigma'(\pi(i))<\sigma'(\pi(j))\}}\right]
\end{align*}

Note that the permutation $\pi$ just maps the sets $\{i,j\}$
bijectively to the sets $\{\nu(i),\nu(j)\}$. Since we are summing over
all the pairs, it means that the two sums must be equal. By choosing
$\pi = \sigma^{-1}$ we get that $ n_d(\sigma, \sigma') = n_d( e,
\sigma'\circ \sigma^{-1}), $ where $e$ is the identity permutation. By
definition $n_d( e, \sigma'\circ \sigma^{-1}) = \ii(\sigma'\circ
\sigma^{-1})$.

\subsection{$\mmd_\kernel$ in Fourier Domain}
\label{AppMMDFourier}

We show that for any kernel $\kernel$ on $\simg$ the maximum mean discrepancy can satisfies the identity:
\begin{align}
\label{EqMMDFourier}
\mmd_\kernel^2(P, Q) & = \frac{1}{d!} \sum_{\lambda \vdash \numObj}  d_\lambda \tr\left[\left(\widehat \probP(\rho_\lambda) -
  \widehat \probQ(\rho_\lambda)\right)^\top \widehat
  \kernel(\rho_\lambda) \left(\widehat \probP(\rho_\lambda) - \widehat
  \probQ(\rho_\lambda)\right) \right]
\end{align}

Let $\alpha_1$, $\alpha_2$ be two independent random permutations
sampled according to the probability distribution $\probP$. Similarly
$\beta_1$ and $\beta_2$ are independent and sampled according to
$\probQ$.  The Fourier inversion formula ensures that
\begin{align}
  \kernel(\alpha_1,\alpha_2) = \kernel(\alpha_1\alpha_2^{-1})=
  \frac{1}{d!}\sum_{\lambda\vdash d} d_\lambda \tr\left[\widehat
    \kernel(\lambda) \rho_\lambda(\alpha_2 \alpha_1^{-1})\right] =
  \frac{1}{d!} \sum_{\lambda\vdash d} d_\lambda
  \tr\left[\rho_\lambda(\alpha_1)^\top \widehat \kernel(\lambda)
    \rho_\lambda (\alpha_2)\right],
\end{align}
where the last equality follows because the irrep $\rho_\lambda$ is
one of Young's orthogonal representations.
 
Taking expectation with respect to $\alpha_1$ and $\alpha_2$ yields
\begin{align}
\EE \kernel(\alpha_1, \alpha_2) = \frac{1}{d!}\sum_{\lambda\vdash d}
d_\lambda\tr \left[\widehat \probP(\lambda)^\top\widehat k(\lambda)
  \widehat \probP(\lambda)\right].
\end{align}
In an analogous manner, we have
\begin{align*}
\EE \kernel(\beta_1, \beta_2) &= \frac{1}{d!}\sum_{\lambda\vdash d}
d_\lambda\tr \left[\widehat \probQ(\lambda)^\top\widehat
  \kernel(\lambda) \widehat \probQ(\lambda)\right] \text{~ and ~} \EE
\kernel(\alpha_1, \beta_1) ~= \frac{1}{d!}\sum_{\lambda\vdash d}
d_\lambda\tr \left[\widehat \probQ(\lambda)^\top\widehat
  \kernel(\lambda) \widehat \probP(\lambda)\right].
\end{align*}
Given these pieces, the conclusion follows because
\begin{align*}
\mmd_\kernel^2(\probP, \probQ) = \EE \kernel(\alpha_1, \alpha_2) +
\kernel(\beta_1, \beta_2) - \kernel(\alpha_1, \beta_2) -
\kernel(\alpha_2, \beta_1).
\end{align*}
In particular, see the paper~\cite{gretton2012kernel} for a proof of
this last identity.

We note that the Fourier expansion~\eqref{EqMMDFourier} of the $\mmd_\kernel^2$ 
shows that the kernel $\kernel$ is characteristic if and only if $\hat\kernel$ is 
strictly positive definite at all irreps.

{
\bibliographystyle{agsm}
\bibliography{permutations}
}

\end{document}